\documentclass[12pt]{amsproc}

\usepackage[margin=1in]{geometry}
\usepackage[utf8]{inputenc}
\usepackage[dvipsnames]{xcolor}
\usepackage[shortlabels]{enumitem}
\usepackage{graphicx}
\usepackage{amsmath,amssymb,amsthm,mathtools}
\usepackage{MnSymbol}
\usepackage{latexsym, verbatim}
\usepackage{appendix}
\usepackage{units}
\usepackage{dsfont}
\usepackage[normalem]{ulem}
\usepackage[hidelinks]{hyperref}
\usepackage{algpseudocode}
\usepackage{relsize}
\usepackage{tikz}
\usepackage{standalone}
\usetikzlibrary{calc, positioning}
\usepackage[ruled, vlined]{algorithm2e}

\usepackage{booktabs}
\usepackage{etoolbox} 
\usepackage{xparse}  
\usepackage{bm}

\usepackage{soul}
\usepackage{nicematrix}
\usepackage{pgfplots}
\pgfplotsset{compat=1.18}
\usetikzlibrary{pgfplots.groupplots}
\usepgfplotslibrary{groupplots}
\usepgfplotslibrary{statistics}

\mathtoolsset{showonlyrefs}
\newtheorem{theorem}{Theorem}[section]
\newtheorem{corollary}[theorem]{Corollary} 
\newtheorem{lemma}[theorem]{Lemma}
\newtheorem{assumption}[theorem]{Assumption}
\newtheorem{proposition}[theorem]{Proposition}

\theoremstyle{definition}
\newtheorem{example}[theorem]{Example}
\newtheorem{definition}[theorem]{Definition}

\newtheorem*{theorem**}{Theorem\theoremnum}
\newenvironment{theorem*}[1][]{%
  \edef\theoremnum{\if\relax\detokenize{#1}\relax\else~#1\fi}
  \begin{theorem**}
}{%
  \end{theorem**}
}  

\newcommand{\be}{\bm{\epsilon}}
\newcommand{\bx}{\mathbf{x}}
\DeclareMathOperator{\pa}{pa}

\newcommand{\Gcal}{\mathcal{G}}
\DeclareMathOperator{\Cov}{Cov}

\DeclareMathOperator{\rank}{rank}
\DeclareMathOperator{\dist}{dist}
\newcommand{\RR}{\mathbb{R}}
\newcommand{\fe}{\mathbf{e}}
\newcommand{\T}{^\mathsf{T}}
\newcommand{\fb}{\mathbf{b}}
\newcommand{\fa}{\mathbf{a}}
\newcommand{\fv}{\mathbf{v}}
\newcommand{\fw}{\mathbf{w}}
\newcommand{\fu}{\mathbf{u}}

\title{Tensor-based second-order causal discovery}
\author{Nathan Ouyang, Kexin Wang and Anna Seigal}
\date{}

\begin{document}

\begin{abstract}
Causal discovery seeks to uncover the causal dependencies among variables. For this purpose, we propose an algorithm called Tensor-based Second-order Causal Discovery (TSCD). Its input is a tensor obtained from the covariance matrices of observational and interventional data. Assuming the causal dependencies follow a linear structural equation model on a directed acyclic graph (DAG), TSCD outputs the DAG and the functions on its edges, requiring only that the noise variables are uncorrelated. We also implement a version of the approach for nonlinear models. Our focus on second-order statistics (via the covariance matrices) is motivated by their statistical and computational efficiency relative to higher-order moments, their identifiability relative to first-order statistics, and that they work regardless of whether the variables are Gaussian. We show that TSCD has identifiable causal order and parameters from a number of interventions that is logarithmic in the number of variables. Experiments show that TSCD is robust to noise, competitive with existing methods, and scales to hundreds of variables.
\end{abstract}

\maketitle

\section{Introduction}

Causal discovery is a fundamental problem in biology~\cite{friedman2000using,sachs2005causal}, neuroscience \cite{siddiqi2022causal}, economics~\cite{imbens2020potential}, public health~\cite{greenland1999causal} and machine learning~\cite{scholkopf2021toward}. Its goal is to determine the causal dependencies among a set of variables and to quantify the strength of the causal effects. With this information, one seeks to understand the system mechanisms, make predictions and answer counterfactual questions.

Structural equation models (SEMs)~\cite{pearl2009causality,wright1934method,peters2017elements} are a framework for describing causal relationships. In a SEM, each variable is a function of its direct causes, together with a noise variable. The causal structure is often represented by a directed acyclic graph (DAG) whose edges encode direct causal effects. Linear SEMs (LSEMs) are SEMs in which the functions on the edges are linear. We study both linear and nonlinear SEMs.

Interventions are crucial in causal discovery because observational data alone in general identifies only a set of DAGs that induce the same conditional independence relations, known as a Markov equivalence class~\cite{verma2022equivalence}. Perturbing variables alters their dependence on their parents, breaking observational symmetries.  An example is Perturb-seq~\cite{dixit2016perturb}, where genetic perturbations are applied and their effects on gene expression are measured. 

In this paper, we propose a new algorithm for causal discovery, which we call \emph{Tensor-based Second-order Causal Discovery} (TSCD). It recovers an unknown causal order, DAG, and model parameters, from the covariance matrices of observational and interventional data, assuming that the data follows an LSEM on an unknown DAG. The only distributional assumption required is that the noise variables are uncorrelated. We also propose an extension of the method to nonlinear SEMs, which we call TSCD-nonlinear.

TSCD is {\em second-order} because it takes as input only the covariance matrices of the data.
A method that uses only second-order statistics is necessary for a system with multiple Gaussian noise variables. 
It is practical when noise variables are close to Gaussian, making it difficult to estimate
higher-order statistics. Second-order statistics are practical in applications where per-context sample sizes are too small to reliably estimate full distributions. We see that such scenarios limit the accuracy of approaches that use
non-Gaussianity assumptions, such as Linear Non-Gaussian Acyclic Model (LiNGAM)~\cite{shimizu2006lingam}, or
distributions for conditional independence testing, such as the Peter-Clark (PC) algorithm~\cite{spirtes2000causation}.
Second-order statistics are also useful summaries for situations where raw data are unavailable due to privacy, storage, or communication constraints. They are more informative than first-order statistics such as 
mean vectors, from which the parameters in our model are not identifiable.

Several works have used second-order statistics for 
causal discovery and parameter estimation in an LSEM. From observational data alone, the problem is well-known to be non-identifiable~\cite{peters2017elements}. To get around this, approaches include 
restricting the noise variances~\cite{peters2014identifiability}, applying nonlinear transformations~\cite{schultheiss2023ancestor}, parameter identifiability for a fixed graph~\cite{foygel2012half,drton2011global}, and outputting an equivalence class of DAGs~\cite{chickering2002optimal}. By comparison, TSCD makes no assumptions on the graph, beyond that it is a DAG, and makes no extra assumptions on parameters, beyond those of a usual LSEM. Its only distributional assumption on the noise variables is that they are uncorrelated. 

To identify the causal order and model parameters, TSCD builds on a line of work to use 
the compatibility of causal structure across multiple contexts (also called views or environments) to recover parameters from a collection of covariance matrices~\cite{heurtebise2025identifiable,rothenhausler2015backshift,peters2016causal,squires2023linear}. It studies observational data together with data generated via perfect interventions. 

TSCD is \emph{tensor-based} because it works by constructing a tensor from the covariance matrices, whose decomposition encodes the causal relationships and intervention patterns. For $p$ variables in $k$ contexts, it is a $p \times p \times k$ tensor that is symmetric in its first two indices; i.e., a stack of $k$ many $p \times p$ symmetric matrices, see Figure~\ref{fig:precision tensor construction}. The stability of second-order statistics across contexts is a necessary condition to be a root~\cite{peters2017elements}. We use the tensor decomposition to devise a subspace-membership condition that is both necessary and sufficient. Subspace membership has well-studied connections to tensor decomposition, see~\cite{de2006link,johnston2023computing,kileel2025subspace,landsberg2011tensors,ranestad2026real,wang2023lower,wang2025multi}. TSCD identifies candidate root nodes using a subspace-proximity criterion, refined using pairwise intervention-asymmetric correlation tests. The tensor formulation yields identifiability guarantees, on the causal order and consequently on the parameters of the model. We show that \(\lceil \log_2 p\rceil\) interventional environments, and one observational environment, are sufficient and, in the worst case, necessary to recover the order and parameters.

The paper is organized as follows. In Section~\ref{sec:setup}, we give background, identifiability results, and related work. In Section~\ref{sec:precision}, we show how second-order statistics across observational and intervention contexts are combined into a tensor, which we call the precision tensor. 
In Section~\ref{sec:identifiability} we prove our identifiability results.
In Section~\ref{sec:tensor-based}, we explain how roots can be recovered.
In Section~\ref{sec:algorithm}, we present the TSCD algorithm. 
Section~\ref{sec:experiments} evaluates TSCD on linear and nonlinear models and real datasets.

\section{Setup and main results}\label{sec:setup}

\subsection{Directed acyclic graphs and structural equation models}

Causal relationships among a set of variables can be represented by a directed graph, where nodes are variables and directed edges encode the direct causal relationships between them.

\begin{definition}
A {\em directed acyclic graph} (DAG) is a graph $\Gcal = (V, E)$ with nodes $V$ and directed edges $E \subseteq V \times V$, such that $\Gcal$ contains no directed cycles. 
If there is an edge $j \to i$, we say $j$ is a {\em parent} of $i$. We denote the set of parents of node~$i$ by $\pa(i)$. A {\em root} is a node with no parents. If there is a directed path $j \to \cdots \to i$, we say that $j$ is an {\em ancestor} of $i$.
\end{definition}

A structural equation model writes each variable as a function of its parents in the graph together with an exogenous noise variable. We consider DAGs with vertex set $[p] = \{ 1, \ldots, p\}$.

\begin{definition}
Given a DAG with vertex set $[p]$, where node $i$ represents variable $x_i$, a
{\em structural equation model} (SEM) writes 
\[
x_i = f_i\big(x_{\pa(i)}, \epsilon_i \big),\quad \text{for all} \quad i = 1,\dots,p,
\]
where $f_i$ is a function, $x_{\pa(i)}$ is the set of variables with direct edges to $x_i$, 
and $\epsilon_i$ is an exogenous noise variable. 
A {\em linear structural equation model} (LSEM) is an SEM in which each function $f_i$ is linear:
$$
x_i = \sum_{j = 1}^p \Lambda_{ij} x_j + \epsilon_i,\qquad i = 1,\dots,p,
$$
with $\Lambda_{ij}$ the weight of the dependency $x_j \to x_i$
 (or 0 if there is no such edge). We say that an LSEM follows the structure of a DAG if $\Lambda_{ij} \neq 0$ if and only if $j \in \pa(i)$.
\end{definition}

Writing $\bx = (x_1,\ldots,x_p)\T$ and 
$\be = (\epsilon_1,
\ldots,\epsilon_p)\T$, 
the LSEM can be written as $\bx = \Lambda \bx + \be$ or 
\begin{equation}\label{eq: lsem}
\bx = (I-\Lambda)^{-1} \be,
\end{equation}
where $\Lambda \in \mathbb{R}^{p \times p}$ is the matrix of edges weights. Denote the covariance of exogenous noise variables by \(\Omega\).
Let \(\bx \) have covariance matrix \(\Sigma\) and \emph{precision matrix} 
\(
\Theta := \Sigma^{-1}.
\)
Then
\begin{equation}
\label{eqn:sigma-theta}
\Sigma = (I-\Lambda)^{-1} \Omega (I-\Lambda)^{-\mathsf{T}} \quad \text{and}
\quad 
\Theta = (I-\Lambda)^{\mathsf{T}} \Omega^{-1} (I-\Lambda),
\end{equation}
see e.g.~\cite{brito2002new,sullivant2023algebraic}.
The noise variables being uncorrelated implies that \(\Omega \) is diagonal. 
The fact that the LSEM follows the structure of a DAG means there is a permutation of indices that makes $\Lambda$ strictly lower triangular. This ensures that $(I-\Lambda)$ is invertible.
Finding the reordering that makes $\Lambda$ lower triangular is the causal discovery problem.

The uniqueness of causal order and recovered parameters in the model is important for interpretability and downstream analysis. 
Given a \emph{known} causal order,
the parameters in the model are identifiable from the covariance $\Sigma$, 
e.g. using the LDL decomposition. However, the  matrix \(\Lambda\) in an LSEM on an \emph{unknown} causal order is not identifiable from $\Sigma$. This can be seen by observing that an LDL decomposition of $P \Sigma P\T$ exists for every permutation matrix $P$.
Recovery of the causal order is sufficient for recovery of the DAG and model parameters: subsequent steps, e.g. via regression, can detect the weights on the edges in the total DAG on a causal order, and hence whether edges are present or absent. 

\begin{example}\label{ex: non-identifiable}
Consider the DAG \(x_1 \to x_2\) with SEM
\[
x_1 = \epsilon_1, \qquad x_2 = 2x_1 + \epsilon_2,
\quad 
\text{where} \quad \Omega = \begin{pmatrix} 1 & 0 \\ 0 & 1 \end{pmatrix}.\]
Now consider the reversed DAG \(x_2 \to x_1\), with SEM
\[
x_2 = \tilde{\epsilon}_2, \qquad x_1 = \tfrac{2}{5}x_2 + \tilde{\epsilon}_1, \quad \text{where} \quad 
\tilde{\Omega} = \begin{pmatrix} \frac15 & 0 \\ 0 & 5 \end{pmatrix}.\]
Both models have covariance matrix 
\(
\Sigma =
\begin{pmatrix}
1 & 2\\
2 & 5
\end{pmatrix}
\).
\end{example}

\subsection{Interventions}

Interventions can recover the parameters in LSEMs from covariance matrices identifiably.
We consider \emph{perfect interventions}~\cite{eberhardt2012number}, which remove the influence of parents on the intervened variables.

\begin{definition}[Perfect intervention]
Fix an LSEM
\(
\bx = (I-\Lambda)^{-1} \be,
\)
where \(\be\) is a vector of noise variables and $\Lambda_{ij}$ is the weight on the edge $x_j \to x_i$. Let \(I \subseteq [p]\) be a set of intervened nodes. A \emph{perfect intervention} on \(I\) removes all incoming edges to nodes in \(I\), i.e., sets 
\[ \Lambda_{ij} = 0 \quad \text{ for all } \quad  i \in I \quad \text{and} \quad  j \in \pa(i), \] 
and changes \(\epsilon_i\) to \(\epsilon_i'\) for \(i \in I\),
such that the $p$ noise variables remain mutually uncorrelated.
Hence
the intervention at $I$ can change $\Omega$ at positions $(i,i)$ for $i \in I$. 
We perform multiple interventions on a single LSEM, and call each interventional setting an \emph{intervention context}. The set of intervened nodes in each intervention context is assumed to be known.
\end{definition}

\subsection{Identifiability}\label{sec:theory}

We present the identifiability results for our model. The input consists of the covariance matrices in the observational context and multiple intervention contexts.

\begin{assumption}\label{ass:lsem}
The vector of variables \(\bx=(x_1,\dots,x_p)\T\ \)
follows an unknown LSEM
\[
\bx = \Lambda \bx + \be,
\]
on an unknown DAG $\mathcal{G}$,
with noise variables \(\be=(\epsilon_1,\dots,\epsilon_p)\T\) mutually uncorrelated with nonzero variances, where \(\Lambda_{ij} \) is the edge weight on the edge $x_j \to x_i$.
In the intervention contexts, perfect interventions are performed at a known nonempty subset of variables.
We have access to the covariance matrix $\Sigma_1$ of data in the observational context and covariance matrices $\Sigma_2, \ldots, \Sigma_k$ of data under intervention contexts. 
\end{assumption}

Under Assumption~\ref{ass:lsem}, we show that second-order statistics suffice to recover the causal structure from one observational and $\lceil \log_2 p\rceil$ interventional contexts.

\begin{theorem}[Identifiability from intervention patterns]\label{thm: identifiability}
Consider an unknown LSEM
\[
\mathbf x = \Lambda \mathbf x + \boldsymbol{\epsilon},
\]
under Assumption~\ref{ass:lsem}.
For each \(x_i\), define its intervention-pattern vector \(\mathbf b_i\in\{0,1\}^k\) by
\[ (\mathbf b_i)_j= \begin{cases} 0 & x_i \text{ intervened in context } j \\  1 &  \text{otherwise.} \end{cases} \] 
If the vectors \(\mathbf b_1,\dots,\mathbf b_p\) are nonzero and pairwise distinct, 
then the causal order and adjacency matrix \(\Lambda\) are identifiable.
\end{theorem}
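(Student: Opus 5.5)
The plan is to recover the causal order by repeatedly identifying a \emph{sink} (a node with no children), deleting it, and recursing. Write $\lambda_i\T$ for the $i$-th row of $\Lambda$. Let $\fb_i(j)$ denote the $j$-th entry of $\fb_i$, and $\omega_{ij}$ the noise variance of $x_i$ in context $j$.

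\textbf{Precision matrices.} A perfect intervention in context $j$ zeroes the rows of $\Lambda$ indexed by intervened nodes, so $\Lambda_j = D_j\Lambda$ with $D_j=\mathrm{diag}(\fb_1(j),\dots,\fb_p(j))$. By \eqref{eqn:sigma-theta},
\[
\Theta_j=\Sigma_j^{-1}=\sum_{m=1}^p \omega_{mj}^{-1}\,\bigl(\fe_m-\fb_m(j)\lambda_m\bigr)\bigl(\fe_m-\fb_m(j)\lambda_m\bigr)\T .
\]
Each $\Theta_j$ is computable from the data, and the patterns $\fb_m$ are known.

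\textbf{Sinks.} If $s$ is a sink, coordinate $s$ appears only in the summand $m=s$, because $\Lambda_{ms}=0$ for all $m$. Hence $\Theta_j[s,s]=\omega_{sj}^{-1}$, and
\[
\Theta_j[s,:]/\Theta_j[s,s]=\fe_s\T-\fb_s(j)\lambda_s\T .
\]
So the sink satisfies the following test: the normalized rows $r_j:=\Theta_j[s,:]/\Theta_j[s,s]-\fe_s\T$, stacked over $j=1,\dots,k$, form the rank-one matrix $\fb_s\otimes(-\lambda_s\T)$. Equivalently, $r_j=0$ whenever $s$ is intervened in $j$, and $r_j$ is the same vector across all contexts with $\fb_s(j)=1$, including the observational context $j=1$. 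When the test passes, it also returns $\lambda_s$, i.e.\ the parents of $s$ and their edge weights.

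\textbf{Recursion.} After finding a sink $s$, I marginalize it out. The covariance of $x_{[p]\setminus\{s\}}$ in each context is the corresponding principal submatrix of $\Sigma_j$. This submatrix is again an LSEM on $\Gcal\setminus s$ with the same $\Lambda$ restricted, diagonal noise, and the same interventions. The restricted patterns are still nonzero and pairwise distinct, so I iterate. This yields a reverse causal order together with every row of $\Lambda$, which gives identifiability of both.

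\textbf{Main obstacle: non-sinks must fail the test.} Let $i$ have a nonempty set of children $C$. For $c\in C$, the entry $c$ of row $i$ is
\[
\Theta_j[i,c]=-\omega_{ij}^{-1}\fb_i(j)\Lambda_{ic}\;-\;\omega_{cj}^{-1}\fb_c(j)\Lambda_{ci}\;+\;\sum_{m\in C,\ m\neq c}\omega_{mj}^{-1}\fb_m(j)\Lambda_{mi}\Lambda_{mc}.
\]
Here $\Lambda_{ic}=0$ by acyclicity, and
\[
\Theta_j[i,i]=\omega_{ij}^{-1}+\sum_{m\in C}\fb_m(j)\Lambda_{mi}^2\,\omega_{mj}^{-1}.
\]

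I would first choose $c$ to be a child of $i$ that is minimal in the causal order among $C$, so that no other child of $i$ is a parent of $c$. The cross terms then vanish, giving
\[
r_j[c]=-\fb_c(j)\Lambda_{ci}\,\omega_{cj}^{-1}/\Theta_j[i,i].
\]
This is nonzero exactly when $\fb_c(j)=1$, so the support of the $c$-coordinate of $(r_j)_j$ is exactly the support of $\fb_c$. Since $\fb_c\neq\fb_i$, there are two cases.
\begin{itemize}
\item If $\fb_c$ has a $1$ where $\fb_i$ has a $0$, then $r_j\neq 0$ in a context where $i$ is intervened, violating the rank-one pattern $\fb_i$.
\item Otherwise $\fb_c$ has a $0$ at some context $j'$ where $\fb_i(j')=1$. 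Then $r_{j'}[c]=0$ while $r_1[c]\neq 0$, although the test requires $r_{j'}=r_1$.
\end{itemize}

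The delicate point is that the test must remain valid for arbitrary noise-variance changes $\omega_{mj}$ under interventions. For this reason I rely only on the support (zero versus nonzero) patterns above, not on equalities of magnitudes, and I will check carefully that the choice of a minimal child removes the cross terms in every context.
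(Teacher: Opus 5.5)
Your strategy works, but one step fails as written: the choice of the child $c$. In your own formula for $\Theta_j[i,c]$, the cross terms are $\omega_{mj}^{-1}\fb_m(j)\Lambda_{mi}\Lambda_{mc}$ for $m\in C\setminus\{c\}$. Here $\Lambda_{mc}\neq 0$ means $c\to m$, i.e.\ $m$ is a \emph{child} of $c$. Taking $c$ earliest among the children of $i$ only guarantees $\Lambda_{cm}=0$ (no other child of $i$ is a parent of $c$), which is the wrong condition. For example, if $i\to c\to m$ and $i\to m$, your rule picks $c$, and
\[
\Theta_j[i,c]=-\omega_{cj}^{-1}\fb_c(j)\Lambda_{ci}+\omega_{mj}^{-1}\fb_m(j)\Lambda_{mi}\Lambda_{mc}.
\]
This vanishes in the observational context when $\omega_{c1}^{-1}\Lambda_{ci}=\omega_{m1}^{-1}\Lambda_{mi}\Lambda_{mc}$, and it is nonzero in any context with $\fb_c(j)=0$, $\fb_m(j)=1$. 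So $r_1[c]$ may be $0$, the support of $(r_j[c])_j$ need not be that of $\fb_c$, and neither of your two cases is justified.

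The fix is to take $c$ to be the \emph{last} child of $i$ in a causal order. Then every $m\in C\setminus\{c\}$ precedes $c$, so $\Lambda_{mc}=0$. The cross terms vanish in every context whatever the $\omega_{mj}$, and $r_j[c]=-\fb_c(j)\Lambda_{ci}\omega_{cj}^{-1}/\Theta_j[i,i]$ exactly as you want. The test's requirement $r_j[c]=\fb_i(j)\,r_1[c]$ with $r_1[c]\neq 0$ then forces $\fb_c=\fb_i$, a contradiction.

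With that correction your proof is complete, and it takes a genuinely different route from the paper. The paper subtracts $(\Sigma_j)_{ii}^{-1}\fe_i^{\otimes 2}$ for each intervened $i$ (Wedderburn rank reduction) and stacks the results into the precision tensor $T=\sum_i\omega_i\fv_i\otimes\fv_i\otimes\fb_i$. It then applies Kruskal's criterion (the $\fv_i$ are independent, the $\fb_i$ pairwise non-collinear) to recover all rows of $I-\Lambda$ at once up to scale and permutation, and the unit diagonal of $I-\Lambda$ resolves the remaining ambiguity.

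You need neither the rank-reduction step nor Kruskal. You use that a sink's row of $\Theta_j$ is a multiple of its row of $I-\Lambda_j$, and show by a direct support argument that only sinks pass the pattern test. You then peel sinks off by marginalization. This is legitimate because deleting a sink leaves an LSEM on the induced subgraph, and every parent of a sink is still present when that sink is removed.

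The paper's argument is shorter given Kruskal, and it connects directly to the tensor and subspace-membership machinery behind TSCD, which peels off roots rather than sinks. Yours is elementary, self-contained and constructive, reading off $\lambda_s=-r_1$ directly. It also does not need the observational context: requiring $r_j=\fb_i(j)\,u$ for a single vector $u$ and all $j$, together with $\fb_c\neq 0$, already forces $\fb_c=\fb_i$.
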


We prove Theorem~\ref{thm: identifiability}
by forming a tensor from the covariance matrices under observational and intervention contexts. Its decomposition encodes the rows of \(I-\Lambda\) and the intervention-pattern vectors. 
Identifiability follows from the uniqueness of its decomposition. 
The proof is in Section~\hyperref[proof: thm: identifiability]{\ref*{sec:tensor-based}}.

\begin{corollary}\label{cor: log2pcontexts}
Given an observational context, \(\lceil \log_2 p\rceil\) perfect-intervention contexts are sufficient and in the worst case necessary to recover the DAG and adjacency matrix~$\Lambda$.
\end{corollary}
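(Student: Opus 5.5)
Sufficiency follows from Theorem~\ref{thm: identifiability}, and necessity comes from a counting argument plus an explicit non-identifiable pair of models. Throughout, $k = 1 + m$ with context $1$ observational and contexts $2,\dots,m+1$ interventional.

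\textbf{Sufficiency.} Take $m = \lceil \log_2 p\rceil$ intervention contexts. Since no node is intervened in the observational context, every $\mathbf b_i$ has first coordinate $1$, so each $\mathbf b_i$ is nonzero. Its remaining $m$ coordinates can be assigned freely by choosing the intervention sets. Because $2^m \ge p$, we can assign distinct binary strings $\mathbf c_1,\dots,\mathbf c_p\in\{0,1\}^m$ and intervene on $x_i$ in context $j+1$ exactly when $(\mathbf c_i)_j = 0$. Assumption~\ref{ass:lsem} requires every intervention set to be nonempty. To guarantee this, pick the strings so that for each $j$ some $\mathbf c_i$ has a $0$ in coordinate $j$; one choice is to always include the all-zeros string. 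Then $\mathbf b_i = (1,\mathbf c_i)$ are nonzero and pairwise distinct, and Theorem~\ref{thm: identifiability} gives identifiability of the causal order and of $\Lambda$. The DAG is then the support of $\Lambda$.

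\textbf{Necessity in the worst case.} Suppose $m < \lceil\log_2 p\rceil$, so $2^m < p$. By pigeonhole, two nodes $a\ne b$ have identical intervention patterns: in every context, either both are intervened or neither is. We must exhibit two LSEMs, consistent with all $m+1$ covariance matrices, whose DAGs differ. The plan is to generalize Example~\ref{ex: non-identifiable} with $x_1 = x_a$ and $x_2 = x_b$, all other nodes isolated.
\begin{itemize}
\item In every context where $a,b$ are not intervened, the $2\times 2$ block of $\Sigma$ is $\begin{pmatrix}1&2\\2&5\end{pmatrix}$. This block is generated equally by $a\to b$ with $\Omega=I$ and by $b\to a$ with $\tilde\Omega=\mathrm{diag}(1/5,5)$.
\item In every context where both are intervened, all edges into $a$ and $b$ are removed in either model. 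The interventional noise variances can be chosen identically in both models, so those covariances also agree.
\item The other nodes are independent of $a,b$ and identical in both models.
\end{itemize}
Hence all $m+1$ covariance matrices coincide while the two DAGs (and the two matrices $\Lambda$) differ, so neither the order nor $\Lambda$ is identifiable. This shows $\lceil \log_2 p\rceil$ contexts are needed in the worst case, i.e.\ for this family of graphs.

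The main obstacle is the necessity construction. One must check that a perfect intervention, which is allowed to change noise variances arbitrarily, cannot break the equivalence of the two models. This works because the equivalence only has to hold on contexts where $a$ and $b$ are treated identically, and in each such context either both edges are removed or neither is. A secondary point is the reading of ``worst case'': for some DAGs fewer interventions suffice. The claim is therefore that no fixed design with fewer than $\lceil\log_2 p\rceil$ interventional contexts works for all LSEMs on $p$ nodes, which the above counterexample establishes.
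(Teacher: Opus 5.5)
Your proof is correct and follows the same route as the paper. Sufficiency comes from assigning distinct patterns $\mathbf b_i=(1,\mathbf c_i)$ and invoking Theorem~\ref{thm: identifiability}, and necessity comes from pigeonhole, which forces two nodes to share a pattern so that their joint $2\times 2$ block is as non-identifiable as in the purely observational case. The differences are in execution and are to your credit. You ensure each interventional set is nonempty, as Assumption~\ref{ass:lsem} requires and the paper's assignment does not check. You also certify necessity with two explicit LSEMs built from Example~\ref{ex: non-identifiable} that have equal covariance matrices in every context, including the jointly intervened ones, where the paper instead argues non-uniqueness of the summand $M=\omega_1\mathbf v_1^{\otimes 2}+\omega_2\mathbf v_2^{\otimes 2}$ of the precision tensor.
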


Corollary~\ref{cor: log2pcontexts} follows by observing that \(\lceil \log_2 p\rceil\) intervention contexts suffice to assign distinct intervention-pattern vectors to all nodes. With fewer contexts, there exist two variables with the same intervention-pattern vectors, making the edge between them non-identifiable. See Section~\hyperref[proof: cor: log2pcontexts]{\ref*{sec:tensor-based}}.

\subsection{Algorithmic overview}

The proof of Theorem~\ref{thm: identifiability} suggests an algorithm for recovering the causal order and parameters of the model: stack the covariance matrices across the contexts into a tensor and decompose it.

TSCD performs this tensor decomposition, with two modifications. First, modified precision (inverse covariance) matrices are stacked across contexts, rather than the covariance matrices, since they yield a lower rank decomposition that encodes the DAG and intervention patterns. Second, the algorithm does not compute a full decomposition, 
since this is less numerically stable. Instead, it uses the sparsity of $\Lambda$ and the known intervention-pattern vectors to build the decomposition step by step.

TSCD uses a subspace-proximity score that, in the population setting, takes the value 1 only at roots of the DAG. 
  In finite samples, it keeps a few high-scoring candidates and uses correlations to choose among them. After selecting a root, it removes it and repeats.

\subsection{Related Work}\label{sec:relatedworks}

The PC algorithm~\cite{spirtes2000causation} is a constraint-based method that recovers the Markov equivalence class of a DAG using conditional independence tests. It iteratively removes edges based on conditional independence relations and orients edges using logical rules.

Greedy equivalence search (GES)~\cite{chickering2002optimal} learns a DAG by optimizing a score (e.g., the Bayesian Information Criterion) computed over the space of equivalence classes using a greedy forward-backward procedure. Its extension to interventional data, greedy interventional equivalence search (GIES)~\cite{hauser2012characterization}, incorporates interventions into the scoring framework to improve identifiability. 

Interventional greedy sparse permutation (IGSP)~\cite{wang2017permutation}  learns a causal DAG from observational and interventional data with unknown intervention targets by searching over permutations and selecting the sparsest graph consistent with conditional independence and invariance constraints across environments. It is nonparametric but relies on accurate conditional independence testing.

The linear non-Gaussian acyclic model (LiNGAM) via Independent Component Analysis (ICA) \cite{shimizu2006lingam} identifies a linear SEM by assuming non-Gaussian noise. It estimates the mixing matrix via ICA and recovers the causal order by permuting and scaling components to match a DAG structure. Under the assumptions that the noise variables are independent and non-Gaussian, ICA-LiNGAM is identifiable. It breaks down when some noise variables are Gaussian or dependent.

DirectLiNGAM~\cite{shimizu2011directlingam} improves upon ICA-LiNGAM by estimating a causal order through iterative regression: at each step, it identifies an exogenous variable by testing independence between residuals and candidate predictors, removes its effect, and repeats. Though more stable than ICA-based methods, it still relies on independence testing and non-Gaussianity.

NOTEARS~\cite{zheng2018dags} learns a DAG by solving a continuous optimization with a differentiable acyclicity constraint.  It assumes access to individual samples rather than only second-order statistics
and relies on optimization over a nonconvex objective.

Invariant causal prediction (ICP)~\cite{peters2016causal} identifies causal parents of a variable by searching over subsets of variables and selecting those for which the conditional distribution of the target is invariant across environments. This exhaustive subset search is computationally expensive and requires that there is at least one intervention on each parent. We do not compare to this method in our synthetic experiments because existing implementations are not publicly available in Python and the method is designed for parent identification of a single target variable rather than full DAG recovery.

The paper \cite{reisach2021beware} shows that commonly used data-generating processes can introduce artifacts that make the causal learning problem artificially easy. Heuristics such as sorting variables by their marginal variances can recover the causal order well under synthetic settings, not because they capture causal structure, but because the data generation encodes ordering information. 
To account for this, we include a sort-and-regress baseline (SortRegress) in the experiments, which orders variables by their marginal variances and then performs regression. This checks that our method does not rely on such artifacts.

Our approach differs from the above methods in several ways. Unlike constraint-based methods such as PC and IGSP, it does not require access to individual samples and does not use conditional independence testing. Unlike ICA-LiNGAM and DirectLiNGAM, we only require that the noise variables are uncorrelated and do not assume independent or non-Gaussian noise variables. 
Our method is more computationally efficient than GES/GIES for many variables, as the search space becomes large for a score-based greedy search. 
It is also more computationally efficient than NOTEARS for many variables, since the constrained optimization calculates an expensive matrix exponential.
Unlike invariance-based approaches such as ICP, our method does not require exhaustive subset search or interventions that target each parent individually.

\section{The precision tensor}
\label{sec:precision}

We construct the tensor that is the core of our identifiability proof and algorithm. 
It is obtained by combining precision matrices across contexts. 
For $k$ contexts and $p$ variables, it is a tensor $T \in \RR^{p \times p \times k}$ that is symmetric in its first two factors. 
 Its decomposition encodes the DAG and the intervention-pattern vectors.
For related ideas based on stacking covariance matrices rather than precision matrices, see~\cite{wang2026multi}.

We stack precision matrices instead of covariance matrices because they lead to a decomposition involving the rows of $I-\Lambda$ rather than $(I - \Lambda)^{-1}$. The former stays the same on the rows indexed by variables that are not intervened, but the latter does not. 

\begin{definition}[Outer product]
For $\fu \in \RR^p$, the outer product $\fu \otimes \fu$ is the matrix $\fu \fu\T$, which has $(i,j)$ entry $\fu_i \fu_j$. 
For \(\fu \in \mathbb{R}^p\) and \(\fv \in \mathbb{R}^k\), the outer product \(\fu \otimes \fu \otimes \fv \in \mathbb{R}^{p \times p \times k}\) is 
\[
(\fu \otimes \fu \otimes \fv)_{i j \ell} = \fu_i \fu_j \fv_\ell.
\]
\end{definition}

Given a tensor $T \in \RR^{p \times p \times k}$, we denote its entry at position $(i,j,\ell)$ by $T_{ij\ell}$ or $T(i,j,\ell)$. 
We say that a tensor $T$ is symmetric under swapping the first two indices, if
\(
T_{ij\ell}=T_{ji\ell}.
\) for all \( i, j \in [p] \) and \( \ell \in [k] \). 
For example, this holds for the tensor $\fu \otimes \fu \otimes \fv$. 

\begin{definition}[Tensor slices]
For a tensor \(T\in\RR^{p\times p\times k}\), 
fixing one index produces a matrix called a \emph{slice} of the tensor. We denote them by $T(i,:,:) \in \RR^{p \times k}$, $T(:,j,:) \in \RR^{p \times k}$, and $T(:,:,\ell) \in \RR^{p \times p}$.
For example, $T(i,:,:)$ is the matrix with $(j,\ell)$ entry $T(i,j,\ell)$. 
If $T$ is symmetric under swapping the first two indices, then \(T(i,:,:)\) and \(T(:,i,:)\) are the same.
\end{definition}

For more on tensors and their slices, see e.g.~\cite{kolda2009tensor}.
We stack the decompositions in~\eqref{eqn:sigma-theta} together across contexts. To do so, we begin with the following observation, recorded here as a proposition for clarity, which restates the expression for the precision matrix.
Let \(\mathbf{e}_i \in \RR^p\) denote the \(i \)-th standard basis vector.

\begin{proposition}\label{prop:rank-one-sums-observational}
Let $\Theta$ be the precision matrix from the LSEM $\bx = (I - \Lambda)^{-1} \be$ on a DAG, where the noise covariance matrix $\Omega$ is diagonal with non-zero entries. 
    Let \(\mathbf{v}_i\) denote the \(i\)-th row of \((I-\Lambda)\), and let \(\omega_i\) be the \(i\)-th diagonal entry of \(\Omega^{-1}\). 
    Then 
    \begin{equation}
        \label{eqn:theta-as-sum}
    \Theta 
    =
    \sum_{i =1 }^p \omega_i \, \mathbf{v}_i^{\otimes 2}.
    \end{equation}
    Node $i$ is a root of the DAG if and only if $\fv_i = \fe_i$.
\end{proposition}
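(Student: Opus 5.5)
The proof has two parts: the rank-one expansion of $\Theta$, and the characterization of roots.

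For the expansion, we start from the factorization in \eqref{eqn:sigma-theta}, $\Theta = (I-\Lambda)\T \Omega^{-1} (I-\Lambda)$. This holds because $I-\Lambda$ is invertible, being permutation-similar to a unit lower triangular matrix, and because $\Omega$ is diagonal with nonzero entries, so $\Omega^{-1}=\operatorname{diag}(\omega_1,\dots,\omega_p)$ exists. Write $A = I-\Lambda$, whose rows are $\fv_1\T,\dots,\fv_p\T$. Then
\[
A\T \Omega^{-1} A = \sum_{i=1}^p \omega_i\, (A\T \fe_i)(\fe_i\T A) = \sum_{i=1}^p \omega_i\, \fv_i \fv_i\T ,
\]
since $\Omega^{-1}=\sum_i \omega_i \fe_i\fe_i\T$ and $A\T\fe_i = \fv_i$. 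By the definition of the outer product, $\fv_i\fv_i\T = \fv_i^{\otimes 2}$, which gives \eqref{eqn:theta-as-sum}. Equivalently, one can check the identity entrywise: $\Theta_{jl}=\sum_i A_{ij}\,\omega_i\, A_{il}$.

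For the root characterization, note that the $i$-th row of $I-\Lambda$ is $\fv_i = \fe_i - \Lambda(i,:)\T$. The diagonal entry $\Lambda_{ii}$ is zero, because a DAG has no self-loops (a loop would be a directed cycle). Hence $\fv_i=\fe_i$ if and only if $\Lambda_{ij}=0$ for all $j$. Since the LSEM follows the structure of the DAG, $\Lambda_{ij}\neq 0$ exactly when $j\in\pa(i)$. So $\fv_i=\fe_i$ if and only if $\pa(i)=\emptyset$, which is exactly the statement that $i$ is a root.

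No step is a real obstacle. The only points that need care are using the faithfulness convention ($\Lambda_{ij}\neq0 \iff j\in\pa(i)$) and acyclicity to rule out a nonzero diagonal of $\Lambda$; both are built into the definitions above.
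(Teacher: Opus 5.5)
Your proof is correct and matches the paper's: the paper likewise expands $(I-\Lambda)\T\Omega^{-1}(I-\Lambda)$ entrywise as $\sum_i \omega_i (I-\Lambda)_{ij}(I-\Lambda)_{i\ell}$, and characterizes roots as the nodes $i$ for which the $i$-th row of $\Lambda$ vanishes. Your remark that $\Lambda_{ii}=0$ is harmless but not needed, since $\fv_i-\fe_i=-\Lambda(i,:)\T$ holds directly.
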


\begin{proof}
The last statement follows because $\Lambda_{ij} = 0$ for all $j \in [p]$ exactly when $i$ is a root.
    The matrix $\Theta$ has decomposition $(I - \Lambda)\T \Omega^{-1} (I - \Lambda)$, see~\eqref{eqn:sigma-theta}. Its $(j,\ell)$ entry is $\sum_{i = 1}^p \omega_i (I - \Lambda)_{i j} (I - \Lambda)_{i \ell}$. The product $(I - \Lambda)_{i j} (I - \Lambda)_{i \ell}$ is the $(j,\ell)$ entry of $\fv_i^{\otimes 2}$. 
\end{proof}

We derive a related expression for the precision matrix under an intervention context. 
The nodes that are intervened become roots, as the effect of their parents is set to zero. This zeros out the corresponding rows of $\Lambda$. The rows of $\Lambda$ that are not intervened stay the same.

\begin{proposition}\label{prop:rank-one-sums}
Under the same setting as Proposition~\ref{prop:rank-one-sums-observational}, 
let $\Theta_j$ be the precision matrix under a perfect intervention on $I_j \subseteq [p]$. 
Then 
    \[
    \Theta_j
    =
    \sum_{i \in I_j^c} \omega_i \, \mathbf{v}_i^{\otimes 2}
    +
    \sum_{i \in I_j} \omega_{i}' \, \mathbf{e}_i^{\otimes 2},
    \]
    where \(\omega_i'\) are new noise parameters.
\end{proposition}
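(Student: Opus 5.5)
The plan is to reduce Proposition~\ref{prop:rank-one-sums} to Proposition~\ref{prop:rank-one-sums-observational}, applied to the intervened model. Under the perfect intervention on $I_j$, the variables again follow an LSEM, $\bx = \Lambda^{(j)}\bx + \be^{(j)}$. Here $\Lambda^{(j)}$ is obtained from $\Lambda$ by zeroing the rows indexed by $i \in I_j$, and all rows indexed by $i \in I_j^c$ are unchanged. The new noise vector $\be^{(j)}$ has entries $\epsilon_i$ for $i \notin I_j$ and $\epsilon_i'$ for $i \in I_j$.

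First I check that the hypotheses of Proposition~\ref{prop:rank-one-sums-observational} hold for this new model.
\begin{enumerate}
\item \emph{Acyclicity.} Deleting edges from a DAG leaves a DAG. So $\Lambda^{(j)}$ is still strictly lower triangular after the same permutation, and $I - \Lambda^{(j)}$ is invertible.
\item \emph{Diagonal noise covariance.} The definition of a perfect intervention requires the $p$ noise variables to remain mutually uncorrelated. Hence the new noise covariance $\Omega^{(j)}$ is diagonal.
\item \emph{Nonzero diagonal.} Its diagonal entries agree with those of $\Omega$ off $I_j$. On $I_j$ they are the new variances $\operatorname{Var}(\epsilon_i')$, which I take to be nonzero, as the paper's setting implicitly assumes so that the precision matrix exists.
\end{enumerate}
Then $\Theta_j = (I-\Lambda^{(j)})\T (\Omega^{(j)})^{-1}(I-\Lambda^{(j)})$ by~\eqref{eqn:sigma-theta}.

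Next I apply~\eqref{eqn:theta-as-sum} to obtain
\[
\Theta_j=\sum_{i=1}^p \omega^{(j)}_i\, (\fv^{(j)}_i)^{\otimes 2},
\]
where $\fv^{(j)}_i$ is the $i$-th row of $I-\Lambda^{(j)}$ and $\omega^{(j)}_i$ is the $i$-th diagonal entry of $(\Omega^{(j)})^{-1}$. I split the sum over $I_j^c$ and $I_j$.
\begin{enumerate}
\item For $i \in I_j^c$, the $i$-th row of $\Lambda^{(j)}$ equals that of $\Lambda$, so $\fv^{(j)}_i = \fv_i$ and $\omega^{(j)}_i = \omega_i$.
\item For $i\in I_j$, the $i$-th row of $\Lambda^{(j)}$ is zero, so $\fv^{(j)}_i=\fe_i$. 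Equivalently, $i$ is a root of the intervened DAG, by the last statement of Proposition~\ref{prop:rank-one-sums-observational}. Setting $\omega_i' := 1/\operatorname{Var}(\epsilon_i')$ gives the second sum.
\end{enumerate}

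There is no real obstacle here. The only point needing care is confirming that the intervened model satisfies the hypotheses of Proposition~\ref{prop:rank-one-sums-observational}, in particular that the new noise variances are nonzero so that $\Theta_j$ is defined.
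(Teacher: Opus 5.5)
Your proposal is correct and matches the paper's proof. Both apply the decomposition of Proposition~\ref{prop:rank-one-sums-observational} to the intervened model, then use that $\Lambda_j$ has zero rows on $I_j$ and agrees with $\Lambda$ elsewhere, while the noise precisions change only on $I_j$. Your explicit check that the intervened model still satisfies the hypotheses (acyclicity, diagonal noise covariance, nonzero new variances) is slightly more careful than the paper, which leaves these points implicit.
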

\begin{proof} The precision matrix can be decomposed as
    \[
    \Theta_j = (I-\Lambda_j)^{\mathsf{T}} \Omega^{-1}_j (I-\Lambda_j) = \sum_{i = 1}^p (\Omega_j^{-1})_{ii} (I - \Lambda_j)_{i, :}^{\otimes 2} ,
    \]
 as in Proposition~\ref{prop:rank-one-sums-observational}. 
 It remains to relate the matrices $\Lambda_j$ and $\Omega_j$ to $\Lambda$ and $\Omega$ respectively. 
    Under a perfect intervention on node \(\ell \), all incoming edges to \( \ell \) are removed, i.e., the entries of the \( \ell \)-th row of \(\Lambda\) are zeroed out.
    Thus $\Lambda_j$ is zero on rows $i \in I_j$ and equals $\Lambda$ on rows $i \notin I_j$. Finally,
    \( 
    (\Omega_j^{-1})_{ii} \) is $
    \omega_{i}'$ for $i \in I_j$ and $\omega_i$ for $i \notin I_j$, since the intervention can alter the variance of the exogenous noise variables at intervened variables.
\end{proof}

Though the above matrix decompositions are not identifiable, the recovery of certain parameters is unique: the coefficients of the vectors $\fe_i$.

\begin{proposition}[Rank reduction]
\label{prop:rank-reduction}
Let intervention context $j \geq 2$ be a perfect intervention on a nonempty set $I_j \subseteq [p]$. 
    For each \(i \in I_j\), there is a unique scalar \(\mu \) such that
\[
\rank \left( \Theta_j - \mu \, \mathbf{e}_i^{\otimes 2}\right)
=
\rank \left(\Theta_j \right) - 1.
\]
Moreover, we have $\mu = \tfrac{1}{(\Sigma_j)_{ii}}$.
\end{proposition}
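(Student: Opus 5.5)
The key fact is that $\Theta_j$ is invertible: it is a precision matrix with $\Theta_j = (I-\Lambda_j)\T \Omega_j^{-1}(I-\Lambda_j)$, where $I-\Lambda_j$ is invertible (unitriangular after a permutation) and $\Omega_j$ is diagonal with nonzero entries. So $\rank(\Theta_j)=p$, and the claim becomes: there is a unique $\mu$ with $\det(\Theta_j - \mu\,\fe_i\fe_i\T)=0$, and this $\mu$ equals $1/(\Sigma_j)_{ii}$. The proof is a direct computation with the matrix determinant lemma, and Proposition~\ref{prop:rank-one-sums} is only needed for context. Subtracting a rank-one matrix changes the rank by at most one, so rank $p-1$ is equivalent to singularity.

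The steps are as follows.

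\begin{enumerate}
\item Apply the matrix determinant lemma:
\[
\det\bigl(\Theta_j - \mu\,\fe_i\fe_i\T\bigr) = \det(\Theta_j)\bigl(1 - \mu\,\fe_i\T\Theta_j^{-1}\fe_i\bigr) = \det(\Theta_j)\bigl(1-\mu(\Sigma_j)_{ii}\bigr).
\]
\item Since $\det(\Theta_j)\neq 0$, this function of $\mu$ is affine. It vanishes exactly when $\mu(\Sigma_j)_{ii}=1$.
\item Check that $(\Sigma_j)_{ii}\neq 0$, so that this equation has exactly one solution, $\mu = 1/(\Sigma_j)_{ii}$.
\end{enumerate}

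Step 3 is the only point needing care. $\Sigma_j$ is positive definite, being a covariance with invertible factorization $(I-\Lambda_j)^{-1}\Omega_j(I-\Lambda_j)^{-\mathsf T}$ and $\Omega_j$ having positive diagonal. Hence $(\Sigma_j)_{ii}>0$, and the root is unique.

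One can also see this directly for $i\in I_j$. Node $i$ is a root in context $j$, so $x_i=\epsilon_i'$ and $(\Sigma_j)_{ii} = 1/\omega_i'$. The unique $\mu$ is then exactly the coefficient $\omega_i'$ of $\fe_i^{\otimes 2}$ in Proposition~\ref{prop:rank-one-sums}. This matches the intended interpretation: removing that term leaves a sum of $p-1$ rank-one terms whose vectors are rows of an invertible matrix, so the rank is exactly $p-1$.

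This step is not a real obstacle; the whole argument rests on the invertibility of $\Theta_j$ and the positivity of $(\Sigma_j)_{ii}$. The main thing to state carefully is that the rank drops by at most one under a rank-one update, so that "rank $p-1$" and "singular" coincide.
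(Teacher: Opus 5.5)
Your proof is correct and is essentially the paper's argument. The paper cites Wedderburn rank reduction to get that $\Theta_j-\mu\,\fe_i\fe_i\T$ drops rank exactly when $\mu = 1/(\fe_i\T\Theta_j^{-1}\fe_i) = 1/(\Sigma_j)_{ii}$; since $\Theta_j$ is invertible this is the same condition as your matrix-determinant-lemma computation, and your check that $(\Sigma_j)_{ii}>0$ and the identification $\mu=\omega_i'$ for $i\in I_j$ fill in details the paper leaves implicit.
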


\begin{proof}
The matrix
\(
\Theta_j-\mu \fe_i\fe_i\T
\)
drops rank by one if and only if
\[
\mu=\frac{1}{\fe_i\T\Theta_j^{-1}\fe_i},
\]
 by Wedderburn rank reduction~\cite{wedderburn1934lectures}. 
The denominator of $\mu$ is 
$\fe_i\T\Sigma_j\fe_i = (\Sigma_j)_{ii}$.
\end{proof}

\begin{corollary}
Under Assumption~\ref{ass:lsem}, we can recover the set of matrices
\[
\mathcal{M}
=
\left\{
\sum_{i \in I_j^c} \omega_i \, \mathbf{v}_i^{\otimes 2}
:\ j=1,\dots,k
\right\}.
\]    
\end{corollary}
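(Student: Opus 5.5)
For each context $j$ I compute $\Theta_j=\Sigma_j^{-1}$ and subtract off the intervened rank-one terms. The scalar for each such term is pinned down by Proposition~\ref{prop:rank-reduction}. The inputs are the covariance matrices $\Sigma_1,\dots,\Sigma_k$ and the known intervention sets $I_j$, with $I_1=\emptyset$ for the observational context. Each $\Sigma_j$ is invertible because of \eqref{eqn:sigma-theta}: under Assumption~\ref{ass:lsem} the intervened model $\Lambda_j$ is still supported on a DAG, so $I-\Lambda_j$ is invertible, and $\Omega_j$ is diagonal with nonzero entries. Hence $\Theta_j$ is available and has full rank $p$.

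For $j=1$ there is nothing to do. Proposition~\ref{prop:rank-one-sums-observational} gives $\Theta_1=\sum_{i\in[p]}\omega_i\fv_i^{\otimes 2}$, which is the required element with $I_1^c=[p]$.

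For $j\ge 2$, Proposition~\ref{prop:rank-one-sums} gives $\Theta_j=\sum_{i\in I_j^c}\omega_i\fv_i^{\otimes2}+\sum_{i\in I_j}\omega_i'\fe_i^{\otimes2}$. I need the individual scalars $\omega_i'$, and I claim $\omega_i'=1/(\Sigma_j)_{ii}$ for each $i\in I_j$. Fix $i\in I_j$. The matrix $\Theta_j-\omega_i'\fe_i^{\otimes 2}$ equals $(I-\Lambda_j)\T D(I-\Lambda_j)$, where $D$ is $\Omega_j^{-1}$ with its $(i,i)$ entry set to zero. Row $i$ of $I-\Lambda_j$ is $\fe_i$ because $i$ is intervened, so this product is exactly the subtracted sum. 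Since $I-\Lambda_j$ is invertible and $D$ has rank $p-1$, the matrix has rank $p-1=\rank(\Theta_j)-1$. By the uniqueness in Proposition~\ref{prop:rank-reduction}, $\omega_i'$ is the unique $\mu$ achieving this rank drop, and therefore $\omega_i'=1/(\Sigma_j)_{ii}$.

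This step is the main point to check, because the proposition characterizes $\mu$ by a rank condition and I must verify that the true $\omega_i'$ satisfies it. The invertibility of $I-\Lambda_j$ together with $\rank D=p-1$ settles it.

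Finally I set
\[
M_j=\Theta_j-\sum_{i\in I_j}\frac{1}{(\Sigma_j)_{ii}}\fe_i^{\otimes2}=\sum_{i\in I_j^c}\omega_i\fv_i^{\otimes2}.
\]
Every quantity in the middle expression is computable from $\Sigma_j$ and $I_j$, so $\mathcal M=\{M_1,\dots,M_k\}$ is recovered. In fact it is recovered as an indexed family, with each $M_j$ attached to its context $j$.
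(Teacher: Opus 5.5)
Your proof is correct and follows the paper's argument: write $\Theta_j$ as in Proposition~\ref{prop:rank-one-sums}, then remove each $\fe_i^{\otimes 2}$ term using the coefficient $1/(\Sigma_j)_{ii}$ supplied by Proposition~\ref{prop:rank-reduction}. Your explicit check that the true $\omega_i'$ achieves the rank drop fills in a step the paper leaves implicit; that check uses that $\Theta_j-\omega_i'\fe_i^{\otimes 2}=(I-\Lambda_j)\T D(I-\Lambda_j)$ with $\rank D=p-1$.
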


\begin{proof}
    The precision matrix $\Theta_j$ is a sum of two types of term: those of the form $\omega_i \fv_i^{\otimes 2}$, for $i \notin I_j$, and those of the form $\mu \fe_i^{\otimes 2}$ where $\mu$ is unique to $(j, i)$. 
    The coefficients of $\fe_i^{\otimes 2}$ can be recovered by Proposition~\ref{prop:rank-reduction}, thus the terms of the second type can be removed.
\end{proof}

\begin{definition}[Precision tensor] \label{def: precision tensor}
We stack the matrices in \(\mathcal{M}\) into a tensor \(T \in \mathbb{R}^{p \times p \times k}\) with slices
\(
T(:,:,j) = \sum_{i \in I_j^c} \omega_i \, \mathbf{v}_i^{\otimes 2}.
\)
We call \( T\) the \emph{precision tensor}.
It has decomposition
\begin{equation}\label{eq: T}
T
=
\sum_{i=1}^p
\omega_i \, \mathbf{v}_i \otimes \mathbf{v}_i \otimes \mathbf{b}_i,
\end{equation}
where \(\mathbf{b}_i \in \mathbb{R}^k\) has \((\fb_i)_j=1\) if node \(i\) is not intervened in context \(j\), and \(0\) otherwise. We call $\fb_i$ the \emph{intervention-pattern vector} of the $i$-th variable.
We collect the intervention-pattern vectors $\fb_i$ together to form the \emph{intervention-pattern matrix} $B \in \RR^{p \times k}$ with $(i,j)$ entry $(\fb_i)_j$, which is $0$ if variable $i$ is intervened in context $j$ and $1$ otherwise.
\end{definition}


\begin{figure}[htbp]
  \centering
  \includegraphics[width=0.6\textwidth]{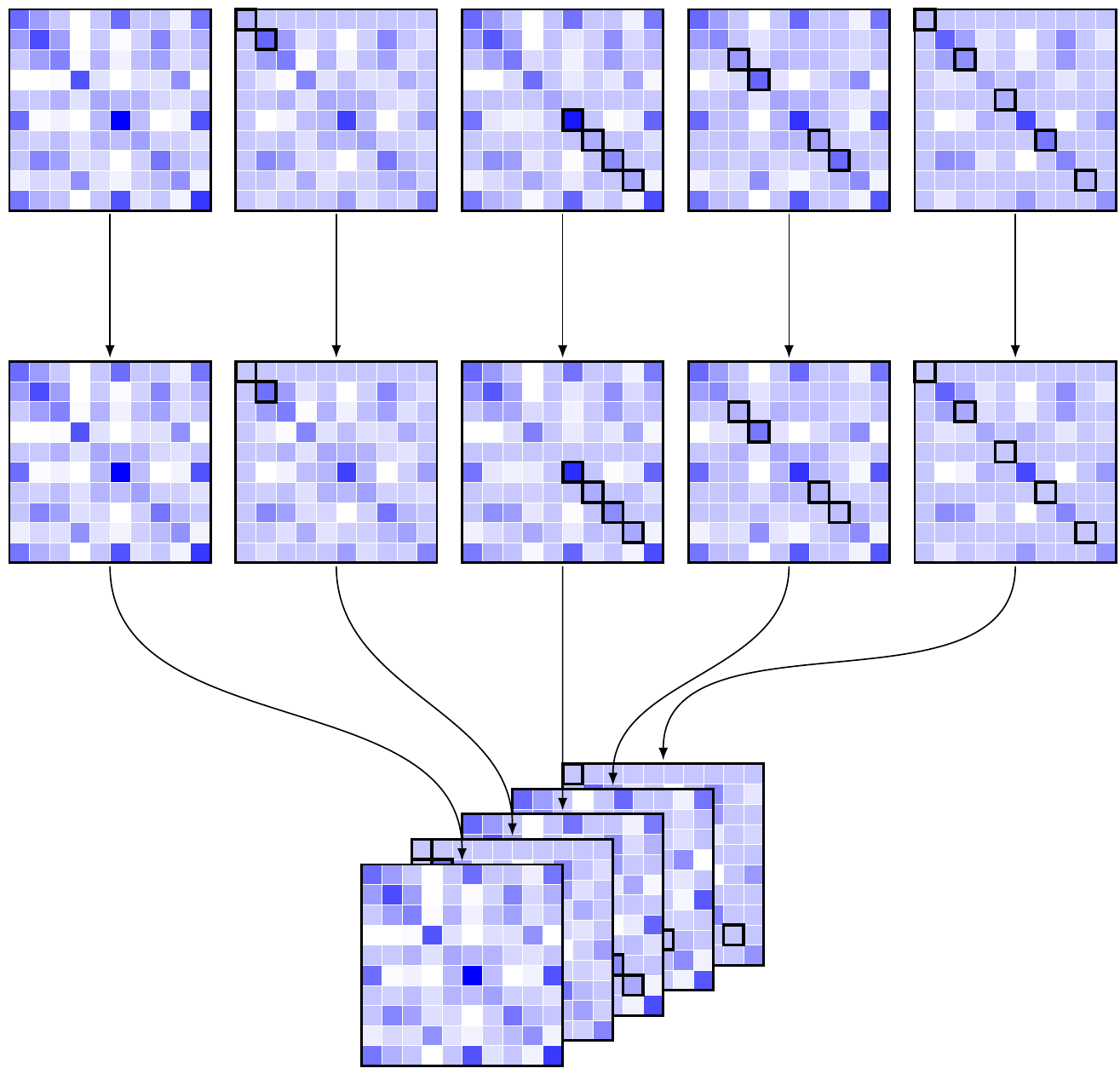}
  \caption{Constructing the precision tensor for 10 variables and five contexts (one observational). Top row: Precision matrices for each context. Middle row: The matrices after rank reduction. Bottom row: The matrices are stacked to form the precision tensor. The boxed entries are the only ones that change in the rank reduction step; they correspond to the zeros of the intervention-pattern matrix $B \in \{ 0, 1\}^{10 \times 5}$ with rows $\fb_i\T$. For example, the boxed entries in the fifth matrix reflect that column $5$ of $B$ has zeros in entries $i = 1,3,5,7,9$.
  }
  \label{fig:precision tensor construction}
\end{figure}

\section{Identifiability of the precision tensor decomposition}
\label{sec:identifiability}

The decompositions of the individual precision matrices~\eqref{eqn:theta-as-sum} are not unique: multiple non-equivalent choices of vectors $\fv_i$ and scalars $\omega_i$ yield the same matrix $\Theta$, cf. Example~\ref{ex: non-identifiable}. 
In comparison, the decomposition of the precision tensor~\eqref{eq: T} is unique, under mild conditions. This typifies the benefit of combining second-order statistics across contexts.

The decomposition of the precision tensor $T$  is a CP decomposition: it writes a tensor as a sum of outer products. The decomposition is said to be unique if any such expression has summands that coincide with $\fv_i\otimes \fv_i\otimes \fb_i$, up to scale and permutation~\cite{chiantini2012generic}. Once the terms $\fv_i \otimes \fv_i \otimes \fb_i$ are recovered, the vectors $\fv_i$ and $\fb_i$ are obtained uniquely, up to scale.

\begin{lemma}\label{lem: unique decomposition}
The decomposition of $T$ is unique when $\fb_1,\ldots,\fb_p$ are pairwise distinct.
\end{lemma}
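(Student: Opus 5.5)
The plan is to deduce the lemma from Kruskal's uniqueness theorem for third-order CP decompositions. Write $T=\sum_{i=1}^p \fa_i\otimes \fa_i'\otimes \mathbf{c}_i$ with $\fa_i=\fa_i'=\fv_i$ and $\mathbf{c}_i=\omega_i\fb_i$, and collect these vectors as the columns of factor matrices $A,A'\in\RR^{p\times p}$ and $C\in\RR^{k\times p}$. Kruskal's theorem says that if the Kruskal ranks satisfy $k_A+k_{A'}+k_C\ge 2p+2$, then $T$ has rank $p$. It also says that every expression of $T$ as a sum of $p$ rank-one tensors has the same summands up to permutation. Scaling within each summand is then the only remaining freedom, which is exactly the notion of uniqueness stated before the lemma. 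So the proof reduces to bounding the three Kruskal ranks.

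\textbf{Step 1 (the $\fv$-factors).} The columns of $A$ and $A'$ are the rows $\fv_1,\dots,\fv_p$ of $I-\Lambda$. Since $\Lambda$ is strictly lower triangular after a permutation of indices, $I-\Lambda$ is invertible, so its rows are linearly independent. Hence $k_A=k_{A'}=p$.

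\textbf{Step 2 (the $\fb$-factor).} We need $k_C\ge 2$, meaning no column of $C$ is zero and no two columns are proportional. Each $\omega_i$ is nonzero, because it is the inverse of a nonzero noise variance (Assumption~\ref{ass:lsem}), so it suffices to check this for the $\fb_i$. Context $1$ is observational, so $(\fb_i)_1=1$ for every $i$; in particular every $\fb_i$ is nonzero. Suppose $\fb_i=\lambda\fb_{i'}$. Comparing first coordinates gives $\lambda=1$, so $\fb_i=\fb_{i'}$, which pairwise distinctness forbids for $i\ne i'$. Therefore $k_C\ge 2$.

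\textbf{Step 3 (conclusion).} Now $k_A+k_{A'}+k_C\ge 2p+2$, so Kruskal's theorem applies. Any decomposition $T=\sum_{i=1}^{p}\fa_i\otimes\fa_i'\otimes\mathbf c_i$ has summands equal to the $\omega_i\,\fv_i\otimes\fv_i\otimes\fb_i$ after reordering. This holds even without imposing symmetry in the first two factors, so it holds in particular for symmetric decompositions. Moreover, $T$ has no shorter decomposition. Within each summand, $\fv_i$ is determined up to scale. Since $\fv_i$ is a row of $I-\Lambda$, its $i$-th entry is $1$, and this fixes the scale. The vector $\fb_i$ is then fixed up to scale by its first entry being $1$, and $\omega_i$ is fixed as the remaining scalar.

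The main subtlety is Step 2. Pairwise distinctness alone does not rule out proportionality or a zero vector. The argument works only because of the observational context: it is the all-ones first coordinate that lets distinctness upgrade to pairwise linear independence. I would state this explicitly, noting that a zero $\fb_i$ would delete the $i$-th term from $T$ altogether. If citing Kruskal's theorem is undesirable, a fallback is a Jennrich-type argument. Contract $T$ in the third mode with two generic vectors $\mathbf{y},\mathbf{z}$ to get $V\T D_{\mathbf y} V$ and $V\T D_{\mathbf z} V$, where $V=I-\Lambda$ and $D_{\mathbf y}$ is the diagonal matrix with entries $\omega_i\,\fb_i\T\mathbf y$. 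Then read off the $\fv_i$ from the eigenvectors of $(V\T D_{\mathbf y}V)(V\T D_{\mathbf z}V)^{-1}$. For this, its eigenvalues $\fb_i\T\mathbf y/\fb_i\T\mathbf z$ must be distinct for generic $\mathbf y,\mathbf z$, and this distinctness is again exactly the pairwise non-proportionality established in Step 2.
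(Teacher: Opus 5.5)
Your proposal is correct and matches the paper's proof. Both apply Kruskal's criterion with $k_A=k_{A'}=p$, from the invertibility of $I-\Lambda$, and $k_C\ge 2$, from pairwise non-collinearity of the $\fb_i$ together with $\omega_i\neq 0$. The only difference is cosmetic: you get non-collinearity from the observational coordinate $(\fb_i)_1=1$, whereas the paper uses that the $\fb_i$ are binary and nonzero, so that $\fb_i=c\,\fb_j$ forces $c=1$.
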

\begin{proof}
The decomposition of $T$ is unique when no pair of vectors $\fb_i,\fb_j$ are collinear and $\fv_1,\ldots,\fv_p$ are linearly independent, by the Kruskal criterion~\cite{kruskal1977three}. 
Linear independence of $\{\fv_i\}_{i=1}^p$ follows from the invertibility of $I-\Lambda$.
Since each $\fb_i$ is binary and nonzero, collinearity only occurs if the vectors coincide.
\end{proof}

Theorem~\ref{thm: identifiability} follows by combining Lemma~\ref{lem: unique decomposition} with the structure of \(I-\Lambda\).

\begin{proof}[Proof of Theorem \ref{thm: identifiability}] \label{proof: thm: identifiability}
Consider the tensor decomposition~\eqref{eq: T}. The vectors $\fv_i$ are linearly independent since they are rows of $I-\Lambda$, which is invertible. 
By Lemma \ref{lem: unique decomposition}, the decomposition of $T$ is identifiable. So, $I-\Lambda$ is identifiable up to row scaling and permutation.
The ambiguity is resolved since each $\fv_i$ is the $i$-th row of $I-\Lambda$, whose $i$-th entry is one.
\end{proof}

The resolution of the scaling and permutation ambiguity in the proof of Theorem~\ref{thm: identifiability} also appears in LiNGAM~\cite{shimizu2006lingam}. Corollary~\ref{cor: log2pcontexts} quantifies how many intervention contexts are sufficient and in the worst case necessary for unique decomposition.

\begin{proof}[Proof of Corollary \ref{cor: log2pcontexts}] \label{proof: cor: log2pcontexts}
With one observational context, each intervention-pattern vector \(\mathbf b_i \in \{0,1\}^k\) has first entry equal to \(1\). Hence, with \(k-1\) additional intervention contexts, there are at most \(2^{k-1}\) distinct binary patterns available. To ensure that all \(p\) nodes have distinct intervention patterns, it suffices that
\[
2^{k-1} \ge p,
\]
which holds when \(k-1 \ge \lceil \log_2 p \rceil\). In this case, we can assign distinct binary vectors \(\mathbf b_i\) to each node, and define the intervention sets \(I_j\) by intervening on node \(i\) in context \(j\) whenever \((\mathbf b_i)_j = 0\). By Theorem~\ref{thm: identifiability}, this ensures identifiability of \(\Lambda\).

For necessity, assume we have fewer than $\lceil \log_2 p \rceil$ additional interventions, then there are two nodes, say 1 and 2, that share the same intervention pattern.
Their contributions to each slice $T(:,:,i)$ of the tensor $T$ in \eqref{eq: T} appear only through
\[
M = \omega_1\mathbf v_1^{\otimes 2} + \omega_2\mathbf v_2^{\otimes 2}.
\]
We show that this decomposition is not unique. 
Suppose $M$ is only nonzero in the top-left \(2\times 2\) submatrix
\[
M_{[2]} =
\begin{pmatrix}
a & b \\
b & c
\end{pmatrix}.
\]
Then there exist distinct pairs \((\mathbf v_1,\mathbf v_2)\) that produce the same \(M_{[2]}\). For example,
\[
\mathbf v_1 = (0,1,0,\ldots,0),\ \mathbf v_2 = (a,b,0,\ldots,0),
\quad \text{and} \quad 
\mathbf v_1 = (1,0,0,\ldots,0),\ \mathbf v_2 = (b,c,0,\ldots,0).
\]
Hence, the edge direction between the first two nodes is not identifiable.
\end{proof}

\section{Tensor-based root selection}
\label{sec:tensor-based}

The previous section demonstrates the identifiability of the causal order, and parameters, from the precision tensor. 
In principle, the parameters of the model can be recovered via usual tensor decomposition of~\eqref{eq: T}, e.g. using~\cite{wang2025multi,sorber2015structured}.
However, in practice such tensor decomposition suffers from numerical instability, worsened by the fact that the intervention-pattern vectors are often close to parallel, as most variables are not intervened. 
To avoid this, TSCD does not compute a full decomposition. Instead, it uses the sparsity of $\Lambda$ and the known intervention-pattern vectors to build the decomposition.

We show how root nodes can be identified from the precision tensor.
Recall that we denote by $T(i,:,:) \in \RR^{p \times k}$ the slice of $T$ obtained by fixing the first index to take the value $i$.  The slices $T(1,:,:), \ldots, T(p,:,:)$  span a linear space, which we denote by $A$. The form of the decomposition in~\eqref{eq: T}
implies that \[
A = \mathrm{span}\{ \mathbf v_i \otimes \mathbf b_i : i=1,\dots,p \} \subseteq \mathbb{R}^{p \times k}.
\]
The connection between uniqueness of tensor decomposition and of the rank one matrices or tensors in a linear space is well-studied, see e.g.~\cite{landsberg2011tensors}.
In our case, it is as follows. 

\begin{lemma}
The only rank-one matrices in $A$ are $\fv_i\otimes \fb_i$ up to scaling for $i=1,\ldots,p$.
\end{lemma}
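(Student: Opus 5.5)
We take a general element of $A$ and use the linear independence of the $\fv_i$ to reduce the question to the rank of a coefficient matrix built from the $\fb_i$. Since $\fv_1,\ldots,\fv_p$ are the rows of the invertible matrix $I-\Lambda$, they form a basis of $\RR^p$. Write a general element of $A$ as
\[
M=\sum_{i=1}^p c_i\,\fv_i\otimes\fb_i = V\T \,\mathrm{diag}(c)\, B,
\]
where $V=I-\Lambda$ has rows $\fv_i$ and $B$ is the intervention-pattern matrix with rows $\fb_i\T$. Because $V\T$ is invertible, $\rank M=\rank(\mathrm{diag}(c)B)$. The nonzero rows of $\mathrm{diag}(c)B$ are the $c_i\fb_i\T$ with $c_i\neq 0$. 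So the task reduces to showing that $\mathrm{diag}(c)B$ has rank at most one exactly when at most one $c_i$ is nonzero.

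This is where we use the hypotheses of Theorem~\ref{thm: identifiability}: the $\fb_i$ are nonzero and pairwise distinct, hence pairwise non-collinear, since they are binary (as in the proof of Lemma~\ref{lem: unique decomposition}). If $c_i\neq0$ and $c_j\neq 0$ for some $i\neq j$, the rows $c_i\fb_i\T$ and $c_j\fb_j\T$ are linearly independent, so $\rank M\ge 2$. Hence if $M$ has rank one, exactly one coefficient $c_i$ is nonzero, and $M=c_i\fv_i\otimes\fb_i$. Conversely, each $\fv_i\otimes\fb_i$ is rank one because both factors are nonzero: $\fv_i$ has $i$-th entry one, and $\fb_i\neq 0$ by assumption.

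The only delicate point is bookkeeping. We must check that the representation $M=\sum_i c_i \fv_i\otimes \fb_i$ captures all of $A$. It does, since $A$ is by definition the span of the $\fv_i\otimes\fb_i$. The $c_i$ need not be unique for the argument; but they are unique anyway, because the $\fv_i\otimes\fb_i$ are linearly independent (the $\fv_i$ are independent and the $\fb_i$ are nonzero). We also need to state explicitly that the lemma is taken under the standing assumptions that the $\fb_i$ are nonzero and pairwise distinct. Without them it fails: if $\fb_1=\fb_2$, then every $(\alpha\fv_1+\beta\fv_2)\otimes\fb_1$ is a rank-one element of $A$.
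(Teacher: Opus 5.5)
Your argument is correct, but it takes a different route from the paper.

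The paper argues by contradiction through the tensor $T$. Suppose a rank-one $\fa\otimes\fb\in A$ is not proportional to any $\fv_i\otimes\fb_i$. The paper writes $\fa\otimes\fb=\sum_j\mu_j\fv_j\otimes\fb_j$ with $\mu_1\neq 0$ and uses this to swap $\fv_1\otimes\fb_1$ out of $T=\sum_j(\omega_j\fv_j)\otimes\fv_j\otimes\fb_j$. The result is a second $p$-term decomposition of $T$, no longer partially symmetric, that contains $\fa\otimes\fb$. This contradicts Kruskal uniqueness, which the paper notes also holds for non-symmetric decompositions.

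You never use $T$ or the weights $\omega_i$. Writing $M=V\T\mathrm{diag}(c)B$ and cancelling the invertible $V\T$ reduces everything to the rank of a matrix whose nonzero rows are the $c_i\fb_i\T$.

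\textbf{What your route buys.} Your version is self-contained. It uses exactly the two facts behind the Kruskal condition here (independence of the $\fv_i$ and pairwise non-collinearity of the $\fb_i$) without citing Kruskal's theorem. It also gives linear independence of the $\fv_i\otimes\fb_i$, hence $\dim A=p$, as a by-product. And it avoids two points the paper leaves implicit: that the substituted decomposition still has $p$ nonzero terms, and that essential uniqueness forces $\fa\otimes\fb$ to match some $\fv_i\otimes\fb_i$.

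\textbf{What the paper's route buys.} It illustrates a general principle: if a CP decomposition is essentially unique and its first-mode vectors are linearly independent, then the only rank-one matrices in the span of the first-mode slices are the given ones, up to scale. That principle still works when uniqueness comes from Kruskal's condition with neither remaining factor matrix of full column rank. In that case your rank computation is no longer available, since there is no invertible factor to cancel.

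Your remark that the lemma silently relies on the $\fb_i$ being nonzero and pairwise distinct is well taken, as is your counterexample with $\fb_1=\fb_2$.
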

\begin{proof}
Assume for contradiction that there exists a rank-one matrix $\fa\otimes \fb$ in $A$ that is not collinear to any  $\fv_i\otimes \fb_i.$ Then $\fa\otimes \fb = \sum_{j=1}^p \mu_j \fv_j\otimes \fb_j$, for some $\mu_j\in \RR$. Without loss of generality, we assume $\mu_1\neq 0$. Hence we can build a new decomposition of $T$ that uses 
$\fa\otimes \fb$ instead of $\fv_1\otimes \fb_1$.
That is, there exists $\mathbf{c}_i\in \RR^p$ for $i=1,\ldots,p$ such that $T = \mathbf{c}_1\otimes \fa\otimes \fb + \sum_{j=2}^p \mathbf{c}_j\otimes \fv_j\otimes \fb_j$. The uniqueness result in Lemma~\ref{lem: unique decomposition} was stated for a decomposition in which the two vectors of length $p$ are the same (a partially symmetric decomposition), but the decomposition is still unique if all vectors are allowed to differ. The new decomposition of \(T\) is thus a contradiction. 
\end{proof}

\begin{lemma}\label{lem: proj_norm}
Node $i$ is a root if and only if 
$\fe_i\otimes \fb_i\in A$.
\end{lemma}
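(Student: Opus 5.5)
The plan is to prove the two directions separately, using Proposition~\ref{prop:rank-one-sums-observational} for the forward direction and the preceding lemma on rank-one matrices in $A$ for the converse.

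\emph{Forward direction.} If node $i$ is a root, then $\fv_i=\fe_i$ by Proposition~\ref{prop:rank-one-sums-observational}. Hence $\fe_i\otimes\fb_i=\fv_i\otimes\fb_i$, which is one of the spanning elements of $A=\mathrm{span}\{\fv_j\otimes\fb_j\}$. So it lies in $A$, and no more work is needed here.

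\emph{Converse.} Suppose $\fe_i\otimes\fb_i\in A$. Since $\fb_i$ is nonzero (it has a $1$ in the observational coordinate), this is a nonzero rank-one matrix in $A$. By the preceding lemma, $\fe_i\otimes\fb_i=c\,\fv_j\otimes\fb_j$ for some index $j$ and some scalar $c\neq0$.

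\begin{enumerate}
\item Compare column spaces and row spaces. This gives $\fe_i\parallel\fv_j$ and $\fb_i\parallel\fb_j$.
\item Both $\fb_i$ and $\fb_j$ are nonzero binary vectors, so $\fb_i\parallel\fb_j$ forces $\fb_i=\fb_j$. Pairwise distinctness then gives $j=i$; this is the same reasoning as in Lemma~\ref{lem: unique decomposition}.
\item Therefore $\fv_i$ is a multiple of $\fe_i$. The $i$-th entry of $\fv_i$, which is the $i$-th row of $I-\Lambda$, equals $1$, because $\Lambda_{ii}=0$ in a DAG. Hence $\fv_i=\fe_i$.
\item By Proposition~\ref{prop:rank-one-sums-observational}, node $i$ is a root.
\end{enumerate}

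\emph{Where the difficulty lies.} The real content is the use of the rank-one lemma, which in turn relies on Kruskal uniqueness. That requires the standing hypothesis that the $\fb_i$ are pairwise distinct, as in Theorem~\ref{thm: identifiability}, and I will state that this hypothesis is in force.

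Without that lemma, I would argue directly as a fallback. Write $\fe_i\otimes\fb_i=\sum_j\mu_j\fv_j\otimes\fb_j$ and apply the inverse of $I-\Lambda$ on the left. Since the $\fv_j$ are linearly independent, the identity becomes $\fu\otimes\fb_i=\sum_j\mu_j\,\fe_j\otimes\fb_j$, where $\fu$ is the $i$-th column of $(I-\Lambda)^{-\mathsf T}$. Reading this row by row gives $\fu_j\fb_i=\mu_j\fb_j$ for each $j$. Because the $\fb_j$ are distinct nonzero binary vectors, this forces $\fu_j=0$ for every $j\neq i$. So $\fu$ is a multiple of $\fe_i$, which again gives $\fv_i=\fe_i$.
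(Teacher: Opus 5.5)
Your proposal is correct. The forward direction matches the paper's. Your fallback for the converse is exactly the paper's proof. The paper picks a dual basis $\fw_\ell$ with $\langle \fw_\ell,\fv_j\rangle=\delta_{\ell j}$; these are the columns of $(I-\Lambda)^{-1}$, so $\langle\fw_\ell,\fe_i\rangle$ is your $\fu_\ell$. It applies $\fw_\ell\T$ to $\fe_i\otimes\fb_i=\sum_j\alpha_j\fv_j\otimes\fb_j$ to get $\langle\fw_\ell,\fe_i\rangle\fb_i=\alpha_\ell\fb_\ell$. It then kills the terms with $\ell\neq i$ by non-collinearity of the $\fb$'s, and finishes by comparing $i$-th entries.

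Your main route instead cites the preceding rank-one lemma. That is legitimate and shorter on the page, but it inherits that lemma's proof, which rests on a basis-replacement argument plus Kruskal uniqueness for a non-symmetric decomposition. The dual-basis computation needs only linear independence of the $\fv_j$ and pairwise non-collinearity of the $\fb_j$.

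The same computation also proves the rank-one lemma itself. For any $\fa\otimes\fb\in A$ it gives $\langle\fw_\ell,\fa\rangle\fb=\mu_\ell\fb_\ell$ for every $\ell$, so at most one $\ell$ can have $\langle\fw_\ell,\fa\rangle\neq 0$. Your remark that the real content lies in Kruskal uniqueness therefore overstates things: the converse is elementary linear algebra.

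Making the distinctness hypothesis explicit, as you do, is a good call. The paper uses it silently when it asserts that $\fb_\ell$ and $\fb_i$ are not collinear for $\ell\neq i$.
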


\begin{proof}
Node $i$ is a root if and only if it has no incoming edges. This implies $\fv_i = \fe_i$, so $\fe_i\otimes \fb_i \in A$. 
Conversely, suppose \(\fe_i \otimes \fb_i \in A\). Then there exist scalars \(\{\alpha_j\}_{j=1}^p\) such that
\begin{equation}\label{eq: ei tensor bi}
\fe_i \otimes \fb_i = \sum_{j=1}^p \alpha_j \fv_j \otimes \fb_j.
\end{equation}
The vectors \(\{\fv_j\}_{j=1}^p\) are linearly independent, since they are rows of the invertible matrix $I-\Lambda$.
Thus, there exists a dual set of vectors \(\{\fw_\ell\}_{\ell=1}^p\) such that
$\langle \fw_\ell, \fv_j \rangle = \delta_{\ell j}.$
Multiplying both sides of \eqref{eq: ei tensor bi} by \(\fw_\ell\T\) yields
\[
\langle \fw_\ell, \fe_i \rangle \fb_i = \alpha_\ell \fb_\ell.
\]
For \(\ell \neq i\), the vectors \(\fb_\ell\) and \(\fb_i\) are not collinear, so the above equality implies
\[
\langle \fw_\ell, \fe_i \rangle = 0 \quad \text{and} \quad \alpha_\ell = 0.
\]
Thus only the term \(\ell = i\) remains, and we obtain
\[
\fe_i \otimes \fb_i = \alpha_i \fv_i \otimes \fb_i.
\]
Since \(\fb_i \neq 0\), this implies \(\fe_i = \alpha_i \fv_i\). Comparing the \(i\)-th entries (both equal to \(1\)) gives \(\alpha_i = 1\), hence \(\fv_i = \fe_i\). Therefore, node $i$ is a root.
\end{proof}

The subspace-membership condition \(\fe_i \otimes \fb_i \in A\) is equivalent to \(\fe_i \otimes \fe_i \otimes \fb_i\) being a rank-one summand of $T$.
This says the coefficient of \(\fe_i \otimes \fe_i\) in each precision matrix $\Theta_j$ is the same across all contexts \(j\) in which $i$ is not intervened, cf. Proposition~\ref{prop:rank-reduction}.
Thus, \(\fe_i \otimes \fb_i \in A\) implies that the variance
of node \(i\) is stable across the contexts in which it is not intervened.

We compare our approach to the simpler idea of checking the consistency of the variance of a variable across contexts. Doing so, one recovers the roots of the DAG together with variables that look like roots based on the pattern of interventions, as follows.

\begin{lemma}\label{lem: nodes that have stable second statistics}
Under Assumption~\ref{ass:lsem}, fix \(i \in [p]\). Consider an intervention context in which neither $i$ nor any of its ancestors are intervened on. Then the variance of \(x_i\) is the same as in the observational context. 
\end{lemma}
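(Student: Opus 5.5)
The plan is to express $x_i$ in terms of the noise variables of its ancestors only, and then to check that none of the quantities entering this expression change in the given intervention context. Write the LSEM as $\bx = (I-\Lambda)^{-1}\be$ from \eqref{eq: lsem}. Let $\mathrm{an}(i)$ denote the set consisting of $i$ together with its ancestors. Since $\Lambda$ is supported on the edges of the DAG, the Neumann series $(I-\Lambda)^{-1} = \sum_{m\ge 0}\Lambda^m$ is finite (as $\Lambda$ is nilpotent). The $(i,j)$ entry of $\Lambda^m$ is a sum over directed paths $j \to \cdots \to i$ of length $m$ of products of edge weights. Hence
\[
x_i = \sum_{j \in \mathrm{an}(i)} \big((I-\Lambda)^{-1}\big)_{ij}\,\epsilon_j,
\]
and each coefficient $\big((I-\Lambda)^{-1}\big)_{ij}$ depends only on the weights $\Lambda_{ab}$ with $a \in \mathrm{an}(i)$. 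This uses that every node on a directed path into $i$ is itself in $\mathrm{an}(i)$, so the head of every edge on such a path lies in $\mathrm{an}(i)$.

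Next, compare with the intervention context on a set $I_j$ with $I_j \cap \mathrm{an}(i) = \emptyset$. By the definition of a perfect intervention (see also the proof of Proposition~\ref{prop:rank-one-sums}), the intervened matrix $\Lambda_j$ agrees with $\Lambda$ on every row indexed by $a \notin I_j$, in particular on all rows $a \in \mathrm{an}(i)$. Likewise the noise $\epsilon_a$ is unchanged for $a \notin I_j$. So the path expansion above, carried out in context $j$, gives the same coefficients and the same noise variables $\epsilon_a$ for $a\in\mathrm{an}(i)$. Therefore $x_i$ has the same representation in both contexts.

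Finally, compute the variance using that the noise variables are uncorrelated:
\[
\operatorname{Var}(x_i) = \sum_{a \in \mathrm{an}(i)} \big((I-\Lambda)^{-1}\big)_{ia}^2\, \Omega_{aa}.
\]
Both the coefficients and the entries $\Omega_{aa}$ for $a\in \mathrm{an}(i)$ are the same in the observational and the intervention context. Equivalently, $(\Sigma_j)_{ii} = (\Sigma_1)_{ii}$. Mutual uncorrelatedness is preserved under intervention by definition, so the formula is valid in context $j$ as well.

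The main obstacle is the bookkeeping in the first step: justifying cleanly that the $i$-th row of $(I-\Lambda)^{-1}$ involves only rows of $\Lambda$ indexed by ancestors of $i$. Rather than using the path expansion, I would try arguing directly by induction along a causal order: $x_i = \sum_{a\in\pa(i)}\Lambda_{ia}x_a + \epsilon_i$, where each parent's representation is unchanged by the inductive hypothesis, since the ancestors of a parent are ancestors of $i$.
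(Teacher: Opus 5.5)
Your proof is correct and follows essentially the same idea as the paper: the intervention does not touch the structural equations or noise terms of $i$ and its ancestors, so $x_i$ is the same function of the same ancestral noise variables in both contexts. The paper argues this at the level of distributions (the joint law of $x_i$ and its ancestors is unchanged), whereas you make it explicit through the $i$-th row of $(I-\Lambda)^{-1}$ and use only second moments, which is all the uncorrelated-noise assumption requires; the induction along a causal order that you suggest as an alternative is exactly the paper's argument.
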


\begin{proof}
If none of the ancestors of \(i\) are intervened on, then the structural equations for \(i\) and all of its ancestors are unchanged, and so the joint distribution of \(i \) and its ancestors is unchanged. Since \(x_i\) is a function only of its ancestors and its own noise variable, and since \(i\) is not intervened on, the distribution of \(x_i\) is unchanged. Hence its variance is unchanged.
\end{proof}

We compare Lemmas~\ref{lem: proj_norm} and~\ref{lem: nodes that have stable second statistics}. 
Stable variance is only a necessary condition for a node to be a root, since non-root nodes whose ancestors are never intervened also satisfy this property. 
In comparison, the condition \(\fe_i \otimes \fb_i \in A\) is necessary and sufficient for \(i\) to be a root, because it also incorporates information from contexts where \(i\) is intervened.

\section{Algorithm}\label{sec:algorithm}

We recover a causal order of the nodes in LSEMs with perfect interventions. This is achieved in two steps: a root candidate selection procedure (Section~\ref{sec: proj_norm}) followed by a pairwise refinement step (Section~\ref{sec: pair wise order}). The two-step algorithm design is validated by the ablation experiment in Appendix~\ref{app:ablation}. Estimation of the matrix \(\Lambda\) given the ordering is a standard problem and is deferred to the appendix. 
First we explain how to construct the sample precision tensor.

\subsection{The sample precision tensor}

So far, we have assumed access to the true population covariance matrices, and used these to construct the precision tensor. 
In practice, we recover a causal order from finite samples. 
Let \(X_1\in \RR^{n_1\times p} \) be the observational dataset and \( X_2\in \RR^{n_2\times p}, \ldots,X_k\in \RR^{n_k\times p}\) be the interventional datasets. 
We construct the sample precision tensor in Algorithm~\ref{alg:sample_precision}, see Figure~\ref{fig:precision tensor construction} for an illustration. 

\begin{algorithm}[htbp]
\caption{Sample precision tensor}
\label{alg:sample_precision}
\SetAlgoLined

\KwIn{Sample covariance matrices for contexts $1, \ldots, k$; intervention-pattern matrix \( B \in \RR^{p \times k}\).}

\KwOut{The sample precision tensor.}

\For{\(j=1,\ldots,k\)}{
    \(M_j \gets \Cov(X_j)^{-1}\)\;
    
    \For{\(i=1,\ldots,p\)}{
        \If{\(B_{ij}=0\)}{
            \(M_j \gets M_j-\dfrac{1}{\Cov(X_j)_{i,i}}\,\fe_i\fe_i\T\)\;
        }
    }
}

Form the tensor \(T=[M_1|\cdots|M_k]\in\RR^{p\times p\times k}\)\;

\Return{\( T\)}\;

\end{algorithm}

The sample precision tensor only requires access to the sample covariance matrices $\Cov(X_1),\ldots,\Cov(X_k) \in \RR^{p \times p}$, not to individual data samples. Later for root selection we need the number of samples in each context, but again not the individual samples.

\subsection{Greedy root candidate selection}\label{sec: proj_norm}

We use a greedy procedure to identify root nodes from the precision tensor.
Recall that root nodes are those satisfying \(\fe_i \otimes \fb_i \in A\), see Lemma \ref{lem: proj_norm}. In practice, to quantify how close a node is to satisfying this condition, we compute a subspace-proximity score from projection onto the subspace $A$:
\begin{equation}
    \label{eqn:proj_score}
    \alpha_i = \frac{\|P_A(\fe_i \otimes \fb_i)\|_F}{\|\fb_i\|} \in [0,1],
\end{equation}
where \(P_A\) denotes orthogonal projection onto \(A\).

\begin{lemma}
Node \(i\) is a root if and only if \(\alpha_i=1\).
\end{lemma}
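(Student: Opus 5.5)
The plan is to reduce the statement to Lemma~\ref{lem: proj_norm} by a standard fact about orthogonal projections. The space $\RR^{p\times k}$ carries the Frobenius inner product, and $A$ is a linear subspace, so $P_A$ is an orthogonal projection. First I will check that the score is well defined and lies in $[0,1]$. Since $\fb_i \neq 0$ under the hypotheses of Theorem~\ref{thm: identifiability}, we have $\|\fe_i\otimes\fb_i\|_F = \|\fe_i\|\,\|\fb_i\| = \|\fb_i\|$, so
\[
\alpha_i = \frac{\|P_A(\fe_i\otimes\fb_i)\|_F}{\|\fe_i\otimes\fb_i\|_F}.
\]
For any vector $x$ we have the Pythagorean identity $\|x\|_F^2 = \|P_A x\|_F^2 + \|x - P_A x\|_F^2$, because $x - P_A x \perp A$. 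Hence $\alpha_i \le 1$.

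Second, I will show that $\alpha_i = 1$ holds exactly when $\fe_i\otimes\fb_i\in A$. Set $x = \fe_i\otimes\fb_i$. By the Pythagorean identity, $\alpha_i = 1$ is equivalent to $\|x - P_A x\|_F = 0$, that is, to $x = P_A x$. Since $P_A$ is idempotent with image $A$, this in turn is equivalent to $x\in A$.

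Third, Lemma~\ref{lem: proj_norm} says that $\fe_i\otimes\fb_i\in A$ if and only if node $i$ is a root. Combining this with the previous step proves the claim.

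I do not expect a real obstacle. The only point needing care is the normalization: the identity $\|\fe_i\otimes\fb_i\|_F=\|\fb_i\|$ uses that $\fe_i$ is a unit vector, and $\fb_i\neq 0$ is needed for the denominator to be nonzero. Both follow from the standing assumptions.
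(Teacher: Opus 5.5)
Your proof is correct and follows the same route as the paper: it reduces to Lemma~\ref{lem: proj_norm} and uses that $\fe_i\otimes\fb_i\in A$ exactly when its orthogonal projection onto $A$ has the same Frobenius norm, which equals $\|\fb_i\|$ because $\fe_i$ is a unit vector. You spell out the Pythagorean step and the requirement $\fb_i\neq 0$ more explicitly than the paper does, which is a harmless and welcome addition.
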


\begin{proof}
Node \(i\) is a root if and only if
\(
\fe_i\otimes \fb_i \in A,
\)
by Lemma~\ref{lem: proj_norm}.
This is equivalent to
\[
\|P_A(\fe_i\otimes \fb_i)\|_F
=
\|\fe_i\otimes \fb_i\|_F.
\]
Since \(\|\fe_i\|=1\), we have
\(
\|\fe_i\otimes \fb_i\|_F=\|\fb_i\|,
\)
so this is equivalent to
\(
\frac{\|P_A(\fe_i\otimes \fb_i)\|_F}{\|\fb_i\|}
=1
\).
\end{proof}

In the population setting, the top scores equal 1 and exactly identify the root nodes.
The following result shows that if a node's projection score given by~\eqref{eqn:proj_score} is high, then the node is close to being a root, in the sense that vectors $\fe_i$ and $\fv_i$ have similar directions. 

\begin{lemma}\label{lem: high projection score}
Let
\(
A=\mathrm{span}\{\fv_j\otimes \fb_j:j=1,\ldots,p\}
\),
where $\fv_j, \fb_j$ are as in Proposition~\ref{prop:rank-one-sums-observational}.
Let \(\fw_1,\ldots,\fw_p\) satisfy
\(
\langle \fw_\ell,\fv_j\rangle=\delta_{\ell j}.
\)
Let $\alpha_i$ be the projection score from~\eqref{eqn:proj_score} and define 
$\delta_i=\sqrt{1-\alpha_i^2}$.
Assume the $\fb_i$ are separated, in that for every \(j\neq i\),
\begin{equation}\label{eq: separation}
\inf_c\|\fb_i-c\fb_j\|
\geq \gamma_i \|\fb_i\| \quad \text{ for some} \quad \gamma_i>0.
\end{equation}
Then
\[
\left\|
\fe_i-\langle \fw_i,\fe_i\rangle \fv_i
\right\|
\leq
\left(\sum_{j\neq i}
\frac{\|\fw_j\|\|\fv_j\|}{\gamma_i}
\right)\delta_i.
\]
\end{lemma}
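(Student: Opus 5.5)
Let $Y := P_A(\fe_i\otimes\fb_i)$ be the projection of $\fe_i\otimes\fb_i$ onto $A$, and let $E := \fe_i\otimes\fb_i - Y$ be the residual. Since $P_A$ is an orthogonal projection, Pythagoras gives $\|E\|_F^2 = \|\fb_i\|^2 - \|Y\|_F^2 = (1-\alpha_i^2)\|\fb_i\|^2$. Hence $\|E\|_F = \delta_i\|\fb_i\|$. Because $Y\in A$, we can write $Y = \sum_j \beta_j \fv_j\otimes\fb_j$ for some scalars $\beta_j$. This gives
\[
\fe_i\otimes\fb_i = \sum_{j=1}^p \beta_j\, \fv_j\otimes\fb_j + E .
\]
The plan mirrors the proof of Lemma~\ref{lem: proj_norm}, now carrying the error term $E$ throughout.

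\textbf{Step 1: contract with the dual vectors.} Multiply on the left by $\fw_\ell\T$ and use $\langle\fw_\ell,\fv_j\rangle=\delta_{\ell j}$. This yields
\[
\langle \fw_\ell,\fe_i\rangle\,\fb_i = \beta_\ell\,\fb_\ell + \fw_\ell\T E \qquad (\ell\in[p]).
\]
The error satisfies $\|\fw_\ell\T E\|\le \|\fw_\ell\|\,\|E\|_F = \|\fw_\ell\|\,\delta_i\|\fb_i\|$.

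\textbf{Step 2: use separation for $\ell\neq i$.} This is the key step. Suppose first that $\langle\fw_\ell,\fe_i\rangle\neq0$. Dividing by it gives
\[
\|\fb_i - c\,\fb_\ell\| = \frac{\|\fw_\ell\T E\|}{|\langle\fw_\ell,\fe_i\rangle|}, \qquad c=\beta_\ell/\langle\fw_\ell,\fe_i\rangle .
\]
The separation hypothesis~\eqref{eq: separation} bounds the left side below by $\gamma_i\|\fb_i\|$. Rearranging,
\[
|\langle\fw_\ell,\fe_i\rangle| \le \frac{\|\fw_\ell\|\delta_i}{\gamma_i}.
\]
If $\langle\fw_\ell,\fe_i\rangle=0$, this bound holds trivially. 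The only subtlety is this division and the case split, which is mild.

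\textbf{Step 3: expand $\fe_i$ in the basis $\{\fv_j\}$.} Because $\fw$ is dual to $\fv$, we have $\fe_i = \sum_j \langle\fw_j,\fe_i\rangle\fv_j$. Therefore
\[
\fe_i - \langle\fw_i,\fe_i\rangle\fv_i = \sum_{j\neq i}\langle\fw_j,\fe_i\rangle\fv_j .
\]
Applying the triangle inequality and Step 2 gives the bound
\[
\sum_{j\neq i}\frac{\|\fw_j\|\,\|\fv_j\|}{\gamma_i}\,\delta_i,
\]
which is exactly the claimed estimate.

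I expect no real obstacle here. The main care needed is getting the normalization right in Step 2: $\|E\|_F$ carries the factor $\|\fb_i\|$, and this is precisely what cancels against the $\gamma_i\|\fb_i\|$ on the right-hand side of~\eqref{eq: separation}.
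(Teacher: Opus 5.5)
Your proposal is correct and follows the paper's proof: bound the residual of $\fe_i\otimes\fb_i$ off $A$ by $\delta_i\|\fb_i\|$, contract with each dual vector $\fw_\ell$, use separation to get $|\langle \fw_\ell,\fe_i\rangle|\le \|\fw_\ell\|\delta_i/\gamma_i$, and finish with the dual-basis expansion and the triangle inequality. The only cosmetic difference is that you handle $\langle \fw_\ell,\fe_i\rangle=0$ by a case split, whereas the paper factors $|\langle \fw_\ell,\fe_i\rangle|$ out of the infimum over $c$.
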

\begin{proof}
When \(\alpha_i\) is close to \(1\), the matrix \(\fe_i\otimes \fb_i\) is well approximated by a linear combination of the matrices \(\fv_j\otimes \fb_j\).
There exist scalars \(c_1,\ldots,c_p\) such that
\begin{equation}\label{eq: error matrix}
\left\|
\fe_i\otimes \fb_i
-
\sum_{j=1}^p c_j\fv_j\otimes \fb_j
\right\|_F
\leq
\|\fb_i\|\delta_i,
\end{equation}
since 
\[
\dist(\fe_i\otimes \fb_i,A)
=
\|\fe_i\otimes \fb_i\|\sqrt{1-\alpha_i^2}
=
\|\fb_i\|\delta_i.
\]
Multiplying the matrix in \eqref{eq: error matrix} by $\fw_\ell$, we obtain
\begin{align}
\left\|
\langle \fw_\ell,\fe_i\rangle \fb_i
-
c_\ell \fb_\ell
\right\|
& = \| \fw_\ell\T (\fe_i\otimes \fb_i
-
\sum_{j=1}^p c_j\fv_j\otimes \fb_j)\| \\
&\leq \|\fw_\ell\|\left\|
\fe_i\otimes \fb_i
-
\sum_{j=1}^p c_j\fv_j\otimes \fb_j
\right\|_2  \\
& \leq \|\fw_\ell\|\left\|
\fe_i\otimes \fb_i
-
\sum_{j=1}^p c_j\fv_j\otimes \fb_j
\right\|_F 
\leq
\|\fw_\ell\|\|\fb_i\|\delta_i, \label{eq: bound w_l e_i}
\end{align}
where the equality follows from
\( \langle \fw_\ell,\fv_j\rangle=\delta_{\ell j} \).
By the separation assumption \eqref{eq: separation},
\begin{equation}\label{eq: use separation}
\inf_c
\|\langle \fw_\ell,\fe_i\rangle \fb_i-c\fb_\ell\|
=
|\langle \fw_\ell,\fe_i\rangle|
\inf_c\|\fb_i-c\fb_\ell\|
\geq
|\langle \fw_\ell,\fe_i\rangle|\gamma_i\|\fb_i\|.
\end{equation}
Combining \eqref{eq: bound w_l e_i} and \eqref{eq: use separation} gives $|\langle \fw_\ell,\fe_i\rangle|\gamma_i\|\fb_i\|
\leq
\|\fw_\ell\| \|\fb_i\| \delta_i
$ which implies 
\[
|\langle \fw_\ell,\fe_i\rangle|
\leq
\frac{\|\fw_\ell\|}{\gamma_i}\delta_i.
\]
Finally, using the dual-basis expansion
\(
\fe_i=\sum_{\ell=1}^p \langle \fw_\ell,\fe_i\rangle \fv_\ell,
\)
we get
\[
\fe_i-\langle \fw_i,\fe_i\rangle \fv_i
=
\sum_{\ell\neq i}\langle \fw_\ell,\fe_i\rangle \fv_\ell.
\]
Taking norms and applying the bound above gives
\[
\left\|
\fe_i-\langle \fw_i,\fe_i\rangle \fv_i
\right\|
\leq
\sum_{\ell\neq i}
|\langle \fw_\ell,\fe_i\rangle|\,\|\fv_\ell\|
\leq
\left(\sum_{\ell\neq i}
\frac{\|\fw_\ell\|\|\fv_\ell\|}{\gamma_i}
\right)\delta_i.\qedhere
\]
\end{proof}

Searching over the space of rank one matrices, or tensors, to find those that lie close to a subspace is behind tensor decomposition algorithms including~\cite{kileel2025subspace,wang2025multi}. In TSCD, this idea is adapted to testing the membership of specific rank one matrices.

\begin{algorithm}[htbp]
\caption{Greedy causal order recovery}
\label{alg:main_order}
\SetAlgoLined

\KwIn{Sample precision tensor $T \in \RR^{p \times p \times k}$; intervention-pattern matrix \( B \in \RR^{p \times k}\); number of candidates \(n\).}

\KwOut{A causal order \(\pi=(\pi_1,\ldots,\pi_p)\).}

Initialize \( T \), \(S\gets \{1,\ldots,p\}\), and \(\pi\gets (\,)\)\;

\For{\(t=1,\ldots,p\)}{
    Compute the projection operator \(P_A\) onto the subspace spanned by the slices $T(1,:,:),\ldots,T(p,:,:)$ of the current tensor \(T\)\;
    
    \ForEach{\(i\in S\)}{
        \[
        \alpha_i \gets \frac{\|P_A(\fe_i\otimes \fb_i)\|_F}{\|\fb_i\|}.
        \]
    }
    
    Let \(C_t\subseteq S\) be the set of nodes with the top \(n\) values of \(\alpha_i\)\;
    
    \(\pi_t \gets \textsc{RootSelection}(C_t,\Cov(X_j)\}_{j=1}^k,B)\)\tcp*{Algorithm~\ref{alg:pairwise_asym_corr}}
    
    Append \(\pi_t\) to \(\pi\)\;
    
    Remove \(\pi_t\) from \(S\)\;
    
    Delete the \(\pi_t\)-th row and column from each slice $T(:,:,1),\ldots, T(:,:,k)$ of \(T\)\;
}

\Return{\(\pi\)}\;

\end{algorithm}

\subsection{Pairwise intervention-asymmetric correlation tests}\label{sec: pair wise order}

In practice, roots can have scores less than one and non-root nodes may have high scores. See Appendix~\ref{app:root projection scores} for empirical behavior of the scores at roots.
To improve robustness, we retain a small set of top candidates with high $\alpha_i$ scores.

We then use a refinement step that resolves ambiguity among candidate root nodes using asymmetry in correlations across contexts.
Consider two nodes \(i\) and \(j\) that are correlated in the observational context. If intervening on \(i\) leaves the correlation with \(j\) nonzero, then the dependence cannot be explained by paths ending at \(i\); it must flow from \(i\) to \(j\), so \(i\) is an ancestor of \(j\). If instead the correlation disappears, the dependence must arise from a path into \(i\) (either \(j \to i\) or a latent confounder), and thus \(i\) is not a root.
This is formalized in the following lemma.

\begin{lemma}
Let \(i\) and \(j\) be two nodes in a DAG. Assume \(\mathrm{corr}(x_i, x_j)\neq 0\) in the observational context. If there is a context where \(i\) is intervened on and \(j\) is not and in which \(\mathrm{corr}(x_i, x_j)\neq 0\), then \(i\) is an ancestor of \(j\). Otherwise, either \(j\) is an ancestor of \(i\) or \(i\) and \(j\) share an ancestor.
\end{lemma}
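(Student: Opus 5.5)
The plan is to compute the covariance of $(x_i,x_j)$ in the intervention context directly from the path (trek) expansion of the LSEM. In any context, $\bx = (I-\Lambda')^{-1}\be'$. The entry $((I-\Lambda')^{-1})_{ab}$ is the sum over directed paths $b\to\cdots\to a$ of the products of edge weights, and $\Omega'$ is diagonal. Hence
\[
\Cov(x_i,x_j)=\sum_{m}\Omega'_{mm}\,((I-\Lambda')^{-1})_{im}\,((I-\Lambda')^{-1})_{jm},
\]
which is a sum over treks: pairs of directed paths from a common source $m$ to $i$ and to $j$. This is the standard trek rule, and it follows from~\eqref{eqn:sigma-theta}.

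First I treat the intervention context where $i$ is intervened on and $j$ is not. Under the perfect intervention, row $i$ of $\Lambda'$ is zero, so $i$ is a root of the intervened graph. Every directed path ending at $i$ is therefore the trivial path, and $((I-\Lambda')^{-1})_{im}=\delta_{im}$. The sum collapses to
\[
\Cov(x_i,x_j)=\Omega'_{ii}\,((I-\Lambda')^{-1})_{ji}.
\]
If this is nonzero, then $((I-\Lambda')^{-1})_{ji}\neq 0$. That requires a directed path $i\to\cdots\to j$ in the intervened graph, whose edges form a subset of the original edges. So $i$ is an ancestor of $j$ in the original DAG, which proves the first claim.

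For the ``otherwise'' clause I use the observational trek expansion. Since $\mathrm{corr}(x_i,x_j)\neq 0$ observationally, some trek between $i$ and $j$ exists with source $m$ (nonzero terms need a nonzero path product). There are three cases: $m=i$ gives $i$ an ancestor of $j$; $m=j$ gives $j$ an ancestor of $i$; and $m\ne i,j$ means $i$ and $j$ share the ancestor $m$. The disjunction to prove is that $j$ is an ancestor of $i$ or they share an ancestor. I would read ``otherwise'' logically as covering the case where $i$ is not shown to be an ancestor of $j$. If $i$ is not an ancestor of $j$, the first case is excluded and one of the other two holds. If $i$ is an ancestor of $j$, then $i$ itself is a common ancestor of $i$ and $j$ (taking ``ancestor'' reflexively), or one notes the claim is only meaningful as a heuristic.

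The main obstacle is exactly this second clause. It is a weak, essentially tautological statement rather than a true converse. Without a faithfulness assumption, a vanishing interventional correlation does not rule out $i$ being an ancestor of $j$, because path weights could cancel. So I would state the second part as following directly from the trek decomposition of the observational covariance, noting that the dichotomy is exhaustive given a nonzero observational correlation. The first part is the substantive claim and carries the proof.
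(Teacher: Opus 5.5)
Your proof is correct. For the first claim it is the paper's own reasoning made rigorous. The paper gives no formal proof, only the informal remark before the lemma that intervening on $i$ removes every path into $i$, so any surviving dependence must flow from $i$ to $j$. Your observation that row $i$ of $(I-\Lambda')^{-1}$ is $\fe_i\T$, so the interventional covariance equals $\Omega'_{ii}\,((I-\Lambda')^{-1})_{ji}$, is exactly the formal version. It also shows that the observational hypothesis is not needed for this half.

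For the ``otherwise'' clause the routes differ. The paper argues from the \emph{vanishing} interventional correlation that the dependence must enter $i$ through a parent, so $i$ is not a root. You set that hypothesis aside and read the conclusion off an observational trek, which works only with ancestors taken reflexively.

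Your caution is warranted, but the diagnosis can be sharpened, because cancellation alone is not the problem:
\begin{enumerate}
\item Given $\mathrm{corr}(x_i,x_j)\neq 0$ observationally, the strict disjunction is equivalent to $i$ not being a root, which is exactly the paper's heuristic conclusion. If $i$ has a parent and is an ancestor of $j$, that parent is a shared ancestor; otherwise your trek case analysis applies.
\item If $i$ is a root, the observational covariance is $\Omega_{ii}\,((I-\Lambda)^{-1})_{ji}$. In any context that intervenes on $i$ but on no interior node of an $i\to j$ path, the interventional covariance is $\Omega'_{ii}\,((I-\Lambda)^{-1})_{ji}$, so the two vanish together.
\item The strict version therefore fails only in two ways. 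One is co-intervened mediators, e.g.\ $i\to k\to j$ with a context intervening on $\{i,k\}$; this zero is structural, so faithfulness does not rescue it. The other is the vacuous case where no context intervenes on $i$ without $j$.
\end{enumerate}
Hence the reflexive reading you adopt is the only one under which the clause is true. Deriving it directly from the observational trek decomposition is the right resolution.
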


\begin{figure}[h]
\centering
\begin{tikzpicture}[
    node distance=2.2cm,
    var/.style={circle, draw, minimum size=0.8cm},
    int/.style={circle, draw, fill=gray!20, minimum size=0.8cm},
    >=stealth
]

\draw[dashed, gray, thick] (3.5,-1.2) -- (3.5,1.3);
\draw[dashed, gray, thick] (10.5,-1.2) -- (10.5,1.3);

\node[int] (i1) at (0,0) {$i$};
\node[var] (j1) at (2.5,0) {$j$};
\draw[->, thick] (i1) -- (j1);
\node at (1.25,-0.7) {\small dependence remains};
\node at (1.25,0.8) {\small \(i\) intervened, \(j\) not};

\node[int] (i2) at (5.75,0) {$i$};
\node[var] (j2) at (8.25,0) {$j$};
\draw[->, thick] (j2) -- (i2);
\node at (7,-0.7) {\small dependence broken};
\node at (7,0.8) {\small intervening \(i\) removes incoming edge};

\node[int] (i3) at (11.5+1,0) {$i$};
\node[var] (j3) at (14+1,0) {$j$};
\node[var] (u3) at (12.75+1,0.25) {$l$};
\draw[->, thick] (u3) -- (i3);
\draw[->, thick] (u3) -- (j3);
\node at (12.75+1,-0.7) {\small dependence broken};
\node at (12.75+1,0.8) {\small confounding path into \(i\) removed};

\end{tikzpicture}
\caption{Intervention-asymmetric correlation test. If intervening on \(i\) preserves correlation with \(j\), then the dependence must flow from \(i\) to \(j\), so \(i\) is an ancestor of \(j\). If the correlation disappears, the observational dependence may instead come from \(j\to i\) or from a latent common cause.
}
\label{fig:pairwise-asym-corr}
\end{figure}

\subsection{Root selection algorithm}

We use pairwise intervention-asymmetric tests to select a root from a candidate set~\(C\). For each pair \(i,j\in C\), we compare how their correlation behaves across intervention contexts. Correlations are classified as \texttt{NONZERO}, \texttt{ZERO}, or \texttt{INCONCLUSIVE} using hypothesis tests (Appendix~\ref{app:hypothesis}).
If the correlation between \(i\) and \(j\) is \texttt{NONZERO} when intervening on \(i\) but not when intervening on \(j\), this is evidence that \(i\) is upstream of \(j\). Conversely, \texttt{ZERO} correlation weakens this claim. 
We aggregate these pairwise comparisons into node scores and select the node with the highest score as the next root in the causal order.
See Algorithm~\ref{alg:pairwise_asym_corr} for details. 

\begin{algorithm}[htbp]
\caption{\textsc{RootSelection} by pairwise correlation tests}
\label{alg:pairwise_asym_corr}
\SetAlgoLined

\KwIn{Candidate set $C$; sample covariance matrices for contexts $\ell = 1, \ldots, k$; intervention-pattern matrix $B$; sample sizes $\{N_\ell\}$.}

\KwOut{Selected root $r\in C$.}

Initialize $\mathrm{score}(i)\gets 0$ for all $i\in C$\;

\ForEach{pair $i\neq j$ in $C$}{
    \For{$\ell=1,\ldots,k$}{
        Compute the sample correlation $\rho_{ij}^{(\ell)}$ in each context.\;
        
        Classify $\rho_{ij}^{(\ell)}$ as \texttt{NONZERO}, \texttt{ZERO}, or \texttt{INCONCLUSIVE} for all $\ell$.\;
    }

    Let $\mathcal{C}_{\text{obs}}, \mathcal{C}_{i}, \mathcal{C}_{j}$ be the collections of contexts in which neither, only $i$, or only $j$ are intervened, respectively\;
    
Initialize $s_i\gets 0$ and $s_j\gets 0$\;

\If{$\mathcal{C}_{\text{obs}}$ is \texttt{NONZERO}}{
    \If{\texttt{NONZERO} appears in $\mathcal{C}_i$ }{
        $(s_i,s_j)\gets (s_i+1, s_j-1)$\;
    }
    \Else{
        $s_i\gets s_i-1$\;
    }

    \If{\texttt{NONZERO} appears in $\mathcal{C}_{j}$ }{
       $(s_i,s_j)\gets (s_i-1, s_j+1)$\;
    }
    \Else{
        $s_j\gets s_j-1$\;
    }
}

\If{$s_i>s_j$}{
    $\mathrm{score}(i)\gets \mathrm{score}(i)+1$\;
    $\mathrm{score}(j)\gets \mathrm{score}(j)-1$\;
}
\ElseIf{$s_j>s_i$}{
    $\mathrm{score}(j)\gets \mathrm{score}(j)+1$\;
    $\mathrm{score}(i)\gets \mathrm{score}(i)-1$\;
}
}

$r\gets \arg\max_{i\in C}\mathrm{score}(i)$\;

\Return{$r$}\;

\end{algorithm}

One may wonder whether pairwise correlation tests alone are sufficient for causal order. 
While such tests exploit intervention asymmetry, they rely only on local, pairwise information and can be unreliable in finite samples or when intervention patterns are limited, e.g., when there are few or no contexts in which one node is intervened on while the other is not. 
The importance of combining both global projection norm-based candidate selection and local pairwise correlation testing steps is shown in Appendix~\ref{app:ablation}.

\section{Experiments}\label{sec:experiments}

We evaluate TSCD on synthetic linear and nonlinear structural equation models, comparing against causal discovery baselines. Experiment details are in Appendix~\ref{app:exp_details}. 
Experiments were conducted on a MacBook Pro with an Apple M2 chip and on a university computing cluster. 
See \href{https://github.com/QWE123665/Tensor-based-Second-order-Causal-Discovery}{this repository} for the code. 

\subsection{Linear SEM}

We generate data from random LSEMs with \(p=10\) nodes and edge probability \(0.6\), under one observational and four intervention contexts. We consider three noise regimes (Gaussian, heavy-tailed, and mixed).
We compare three classes of methods:

\begin{enumerate}
    \item \textbf{Interventional methods} (ours, GIES, IGSP), which use interventional data and intervention patterns;

    \item \textbf{Observational methods with adjacency matrix output} 
    (LiNGAM, SortRegress, NOTEARS), applied separately to each context and combined;

    \item \textbf{Observational methods with graph output} (GES, PC), applied to five times as many observational samples to match the total sample size used by the other methods.
\end{enumerate}
 
Performance is evaluated using the relative Frobenius error,
\[
\frac{\|{\Lambda_{\text{est}}}-\Lambda\|_F}{\|\Lambda\|_F},
\]
for methods estimating \(\Lambda\), the F1 score for edge recovery,
\[
\mathrm{F1}
=
\frac{2\,\mathrm{precision}\cdot\mathrm{recall}}
{\mathrm{precision}+\mathrm{recall}},
\]
and runtime.
Results are shown in Figure~\ref{fig:lsem}. Our method achieves high accuracy and computational efficiency, performing comparably to LiNGAM in non-Gaussian settings and outperforming baselines when there is Gaussian noise.

\begin{figure}[htbp]
  \centering
  \resizebox{0.95\linewidth}{!}{\input{metrics_gaussian_ratio_0.tex}}
  \resizebox{0.95\linewidth}{!}{\input{metrics_gaussian_ratio_0p5}}
  \resizebox{0.95\linewidth}{!}{\input{metrics_gaussian_ratio_1}}
  \caption{Performance comparison across different noise settings, with Gaussian ratios 0, 0.5, and 1.}\label{fig:lsem}
\end{figure}

\subsection{Scalability}
\label{sec:scalability}

We consider graphs with up to \(p=400\) nodes under sparse LSEMs with perfect interventions.
Figure~\ref{fig:scalability} reports F1 score and runtime as functions of \(p\). The F1 score remains stable above  \(0.94\) across the range, with no degradation as the problem size increases. Runtime grows from \(0.13\)s at \(p=50\) to \(214\)s at \(p=400\). The experiment was conducted on a MacBook Pro with an Apple M2 chip. 

\begin{figure}[ht]
  \centering
  \resizebox{0.7\linewidth}{!}{\begin{tikzpicture}
  \begin{axis}[
      name=f1plot,
      xlabel={$p$},
      ylabel={F1 score},
      title={F1 score vs. number of nodes},
      ymin=0.925, ymax=1.025,
      grid=both,
      grid style={gray!20},
      mark options={scale=0.7},
    ]
    \addplot[
      blue,
      mark=o,
      error bars/.cd,
      y dir=both,
      y explicit,
    ] coordinates {
      (50,  0.9567202745277145) +- (0, 0.0223165979320993)
      (60,  0.9720704222271512) +- (0, 0.0192063809462819)
      (70,  0.9695894935084987) +- (0, 0.0269885121058773)
      (80,  0.9659372707214168) +- (0, 0.0351749145371431)
      (90,  0.9734288288630170) +- (0, 0.0240204937041928)
      (100, 0.9864513050995323) +- (0, 0.0111868963738243)
      (110, 0.9806734263966396) +- (0, 0.0175031432731765)
      (120, 0.9794577128328982) +- (0, 0.0174999516171446)
      (130, 0.9810767865160308) +- (0, 0.0227912305945089)
      (140, 0.9899590464365016) +- (0, 0.0094290863887802)
      (150, 0.9922226647639393) +- (0, 0.0063042738072925)
      (160, 0.9883494122375390) +- (0, 0.0079248974590588)
      (170, 0.9918263992519301) +- (0, 0.0073002521766836)
      (180, 0.9949108938197385) +- (0, 0.0053492355728223)
      (190, 0.9920197265161044) +- (0, 0.0115126001611834)
      (200, 0.9962790197031419) +- (0, 0.0021934603756517)
      (210, 0.9961371203361254) +- (0, 0.0025778926405671)
      (220, 0.9966410025761954) +- (0, 0.0057640554963970)
      (230, 0.9958374454723365) +- (0, 0.0021500808176233)
      (240, 0.9945411222769758) +- (0, 0.0048408615009742)
      (250, 0.9963468100465560) +- (0, 0.0025262240942785)
      (260, 0.9957802362937536) +- (0, 0.0034442892800583)
      (270, 0.9970598757492770) +- (0, 0.0030653319084440)
      (280, 0.9970535179845225) +- (0, 0.0016575203218928)
      (290, 0.9976172062284695) +- (0, 0.0017350391500544)
      (300, 0.9983825685394618) +- (0, 0.0010213312296107)
      (310, 0.9981588903484990) +- (0, 0.0017997897974893)
      (320, 0.9969057540112282) +- (0, 0.0025867394649654)
      (330, 0.9979884765347855) +- (0, 0.0016232299947520)
      (340, 0.9986625439538288) +- (0, 0.0012311092089989)
      (350, 0.9970892506972566) +- (0, 0.0030194123394326)
      (360, 0.9983159748514460) +- (0, 0.0014783537532008)
      (370, 0.9977039496164716) +- (0, 0.0019201221568590)
      (380, 0.9971504161527843) +- (0, 0.0031452223411092)
      (390, 0.9984452983807742) +- (0, 0.0016909550288724)
      (400, 0.9963198375895164) +- (0, 0.0033542220447340)
    };
  \end{axis}

  \begin{axis}[
      at={(f1plot.outer east)}, anchor=outer west,
      xlabel={$p$},
      ylabel={runtime (s)},
      title={Runtime vs. number of nodes},
      grid=both,
      grid style={gray!20},
      mark options={scale=0.7},
    ]
    \addplot[
      green!60!black,
      mark=o,
      error bars/.cd,
      y dir=both,
      y explicit,
    ] coordinates {
      (50,  0.1612808123998548) +- (0, 0.0464098338961806)
      (60,  0.2606885248997060) +- (0, 0.0845397040349237)
      (70,  0.7567445207998389) +- (0, 0.8281307063137766)
      (80,  0.7985257460999492) +- (0, 0.3461818489422592)
      (90,  0.8317659209997146) +- (0, 0.1178150953190129)
      (100, 1.1151397332998385) +- (0, 0.2544686521967156)
      (110, 1.2655249875999288) +- (0, 0.3245177607099380)
      (120, 2.1390525292999882) +- (0, 1.4769622281744019)
      (130, 2.6985196041005111) +- (0, 1.1093961952077103)
      (140, 3.9236471291002091) +- (0, 0.7904486906797931)
      (150, 3.8910020959001486) +- (0, 0.5548385247292711)
      (160, 5.4019059418002140) +- (0, 1.2142505608227896)
      (170, 5.2337228831996985) +- (0, 0.6553527180930825)
      (180, 5.9388335040001037) +- (0, 0.6905412198344302)
      (190, 6.8225534999999580) +- (0, 0.7096866018534449)
      (200, 8.4981171208999520) +- (0, 1.0769401063473065)
      (210, 9.2012919831999174) +- (0, 1.2247190836738053)
      (220, 9.6433036959000678) +- (0, 0.5140383908493146)
      (230, 11.8208248834003822) +- (0, 0.6060198978397972)
      (240, 12.5711084876000321) +- (0, 0.7063664400520450)
      (250, 17.0688888042004692) +- (0, 1.7255704416861137)
      (260, 20.0350682919000960) +- (0, 1.6940706391650671)
      (270, 24.5911729250001372) +- (0, 3.6085548523366926)
      (280, 25.6652304085004275) +- (0, 0.5763007982856730)
      (290, 32.3086848128001307) +- (0, 4.8469238499647433)
      (300, 35.7681752708998815) +- (0, 1.6305994581906216)
      (310, 35.5072569417006889) +- (0, 1.2152344363888481)
      (320, 37.5280111959000351) +- (0, 4.1611004449683833)
      (330, 40.8205650583997368) +- (0, 1.8397156367102852)
      (340, 45.1928820415996597) +- (0, 2.5350773008561314)
      (350, 50.3441207667998256) +- (0, 2.9642565278544573)
      (360, 62.8867249667004202) +- (0, 5.1601377768178089)
      (370, 65.8595781164993213) +- (0, 6.6723192818416104)
      (380, 86.2968754873003547) +- (0, 22.3341294423934507)
      (390, 107.4018354168001679) +- (0, 8.6356662176702521)
      (400, 128.6284492416008902) +- (0, 10.4470399204899298)
    };
  \end{axis}
\end{tikzpicture}}
  \caption{Scalability of TSCD.
  Left: F1 score versus number of nodes.
  Right: runtime versus number of nodes.
  Each point is averaged across 10 random sparse DAGs, each with sample size \(p^2\). The error bars are given by the standard deviation across trials. }
  \label{fig:scalability}
\end{figure}

\subsection{TSCD-nonlinear}

We consider nonlinear SEMs with perfect interventions. The goal is to test whether we can recover the causal order and the DAG based on stability of each node when the structural functions are nonlinear. The study of nonlinear SEMs is widespread, also appearing in e.g.~\cite{zheng2020learning,monti2020causal,hoyer2008nonlinear}. 

TSCD-nonlinear constructs the causal order greedily based on stability across contexts. The initial root is selected using marginal variance stability. At each step, we predict each candidate node from previously selected variables using a neural network trained on pooled non-intervened contexts, and select the node whose prediction error is most stable across contexts. Parent sets are estimated via a gated network for feature selection.

We compare TSCD-nonlinear against a baseline that orders nodes by marginal variance. 
The experiment is conducted on a random  DAG with \(p=10\) nodes and edge probability \(0.8\).
Ordering accuracy is measured by the number of parent-child errors, edges where the parent appears after the child in the recovered order.

Our method recovers the correct causal order, while the baseline produces 13 parent-child errors. A representative example is in Figure~\ref{fig:DAGs nonlinear}. Our recovered graph has an edge error rate of \(0.34\). The runtime is 134 seconds using 5 parallel workers.

\begin{figure}[htbp]
    \centering
    \includegraphics[width=0.38\linewidth]{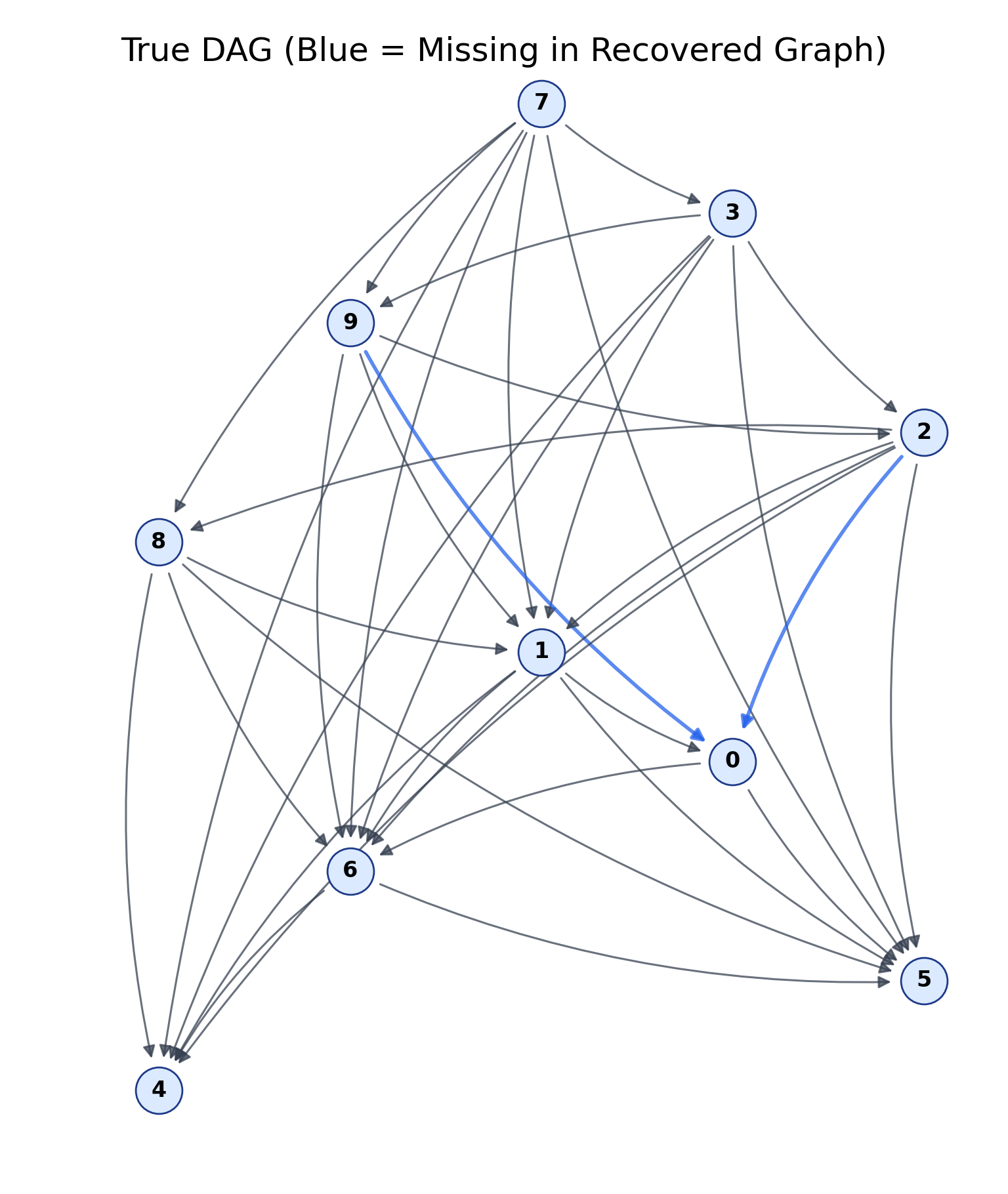}
    \includegraphics[width=0.38\linewidth]{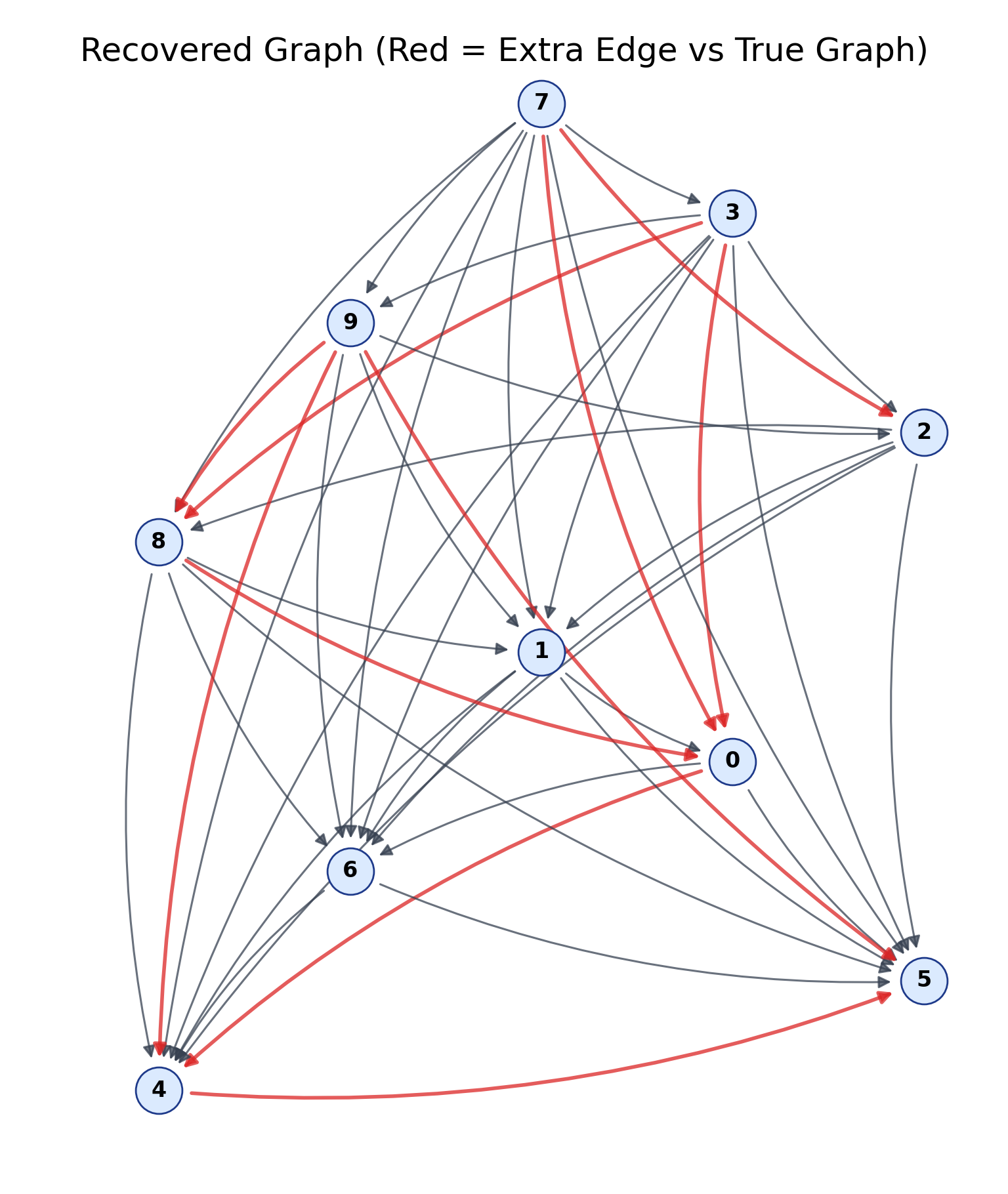}
    \caption{Ground-truth DAG (left) and recovered graph (right) for a nonlinear SEM. The recovered structure matches the true causal order, with an edge error rate of 0.34.}
    \label{fig:DAGs nonlinear}
\end{figure}

\subsection{Light Causal Chamber}
We evaluate TSCD and TSCD-nonlinear on a real-world light-chamber experiment from the Causal Chamber \cite{gamella2025causal}. 
The chamber is a controlled physical system in which light sources and sensor readings interact through the device. We use the light-tunnel intervention dataset \href{https://github.com/juangamella/causal-chamber/tree/main/datasets/lt_interventions_standard_v1}{\texttt{lt\_interventions\_standard\_v1}}. To focus on a setting where the intervention targets can be treated as perfect interventions, we restrict to a partial ground-truth graph over nine variables:
\[
R, G, B, \tilde I_1, \tilde V_1, L_{11}, L_{12}, D^V_1, T^V_1.
\]
Here \(R,G,B\) denote the brightness of the red, green, and blue LEDs respectively on the main light source. The variables \(\tilde I_1\) and \(\tilde V_1\) are the uncalibrated infrared and visible-light intensity measurements from the first light sensor, which is placed before the polarizers. The variables \(L_{11}\) and \(L_{12}\) are the brightness settings of the two auxiliary LEDs placed by this first light sensor. Finally, \(D^V_1\) and \(T^V_1\) are visible-light sensor parameters for the first sensor: the selected visible photodiode and the photodiode exposure time.

We compare against several standard causal discovery baselines. TSCD-nonlinear achieves high accuracy, recovering all ground-truth edges in the selected graph with precision \(0.80\) and recall \(1.00\). TSCD recovers the linear part of the graph, the edges from \(R,G,B\) to the two sink variables \(\tilde I_1\) and \(\tilde V_1\), yielding precision \(0.857\) and recall \(0.500\). In contrast, the competing methods have lower recall or precision on this benchmark, see Table \ref{tab:causal-chamber}.

\begin{table}[t]
\centering
\begin{tabular}{lcc}
\toprule
Method & Precision & Recall \\
\midrule
TSCD & 0.857 & 0.500 \\
TSCD-nonlinear & 0.800 & 1.000 \\
SortRegress & 0.038 & 0.500 \\
LiNGAM (ICA) & 0.028 & 0.500 \\
LiNGAM (Direct) & 0.019 & 0.500 \\
GIES & 0.833 & 0.417 \\
IGSP & 0.000 & 0.167 \\
\bottomrule
\end{tabular}
\caption{Causal discovery performance on the selected Causal Chamber light-tunnel graph.}
\label{tab:causal-chamber}
\end{table}

\subsection{Flow-cytometry Dataset}

We consider the flow-cytometry dataset~\cite{sachs2005causal}, which records single-cell protein phosphorylation for $p = 11$ proteins across nine experimental conditions. 
We pool the two unperturbed baselines into one observational environment, and use the five kinase-inhibitor conditions as intervention contexts.
We exclude the two activator conditions (PMA and \(\beta\)2-cAMP), since inhibitor interventions are closer to perfect interventions.
We compare the graph recovered by our method in Table~\ref{tab:sachs-edges} with two reference DAGs from \cite{sachs2005causal}  and graphs recovered by ICP and hiddenICP~\cite{peters2016causal,meinshausen2016methods}. 

TSCD recovers eight edges, seven of which appear in the DAG from~\cite{sachs2005causal} or recovered by hiddenICP.
Most missing edges are either among \(\{\mathrm{Raf}, \mathrm{PLCg}, \mathrm{Erk}, \mathrm{PKA}, \mathrm{p38}, \mathrm{JNK}\}\), which are not intervened in any context so are not identifiable, or involve $\mathrm{PKA}, \mathrm{PKC}$, where feedback loops and nonlinear effects are expected~\cite{itani2010structure}.

\begin{table}[htbp]
\centering
\small
\begin{tabular}{lccccc}
\toprule
Edge & \cite{sachs2005causal}a & \cite{sachs2005causal}b & \cite{peters2016causal}ICP & \cite{peters2016causal}hiddenICP & TSCD (ours) \\
\midrule
Raf $\to$ Mek          & $\checkmark$ & $\checkmark$ &            & $\checkmark$ &                                  \\
Mek $\to$ Raf          &            &            &            & $\checkmark$ & $\checkmark$                      \\
Mek $\to$ Erk          & $\checkmark$ & $\checkmark$ &            &            &                                  \\
PLCg $\to$ PIP2        & $\checkmark$ & $\checkmark$ & $\checkmark$ & $\checkmark$ & $\checkmark$                       \\
PLCg $\to$ PIP3        &            & $\checkmark$ &            &            & $\checkmark$                    \\
PLCg $\to$ PKC         & $\checkmark$ &            &            &            &                                  \\
PIP2 $\to$ PLCg        &            &            & $\checkmark$ &            &                                  \\
PIP2 $\to$ PKC         & $\checkmark$ &            &            &            &                                  \\
PIP3 $\to$ PLCg        & $\checkmark$ &            &            &            &                                  \\
PIP3 $\to$ PIP2        & $\checkmark$ & $\checkmark$ & $\checkmark$ & $\checkmark$ & $\checkmark$                         \\
PIP3 $\to$ Akt         & $\checkmark$ &            &            &            &                                  \\
Akt $\to$ Erk          &            &            & $\checkmark$ & $\checkmark$ & $\checkmark$                       \\
Akt $\to$ PKA          &            &            &            &            & $\checkmark$                     \\
Erk $\to$ Akt          &            & $\checkmark$ & $\checkmark$ & $\checkmark$ &                                  \\
\textcolor{red}{PKA $\to$ Raf}  & $\checkmark$ & $\checkmark$ &            &            &                     \\
PKA $\to$ Mek          & $\checkmark$ & $\checkmark$ &            &            &                                  \\
\textcolor{red}{PKA $\to$ Erk}  & $\checkmark$ & $\checkmark$ & $\checkmark$ &                              &                              \\
PKA $\to$ Akt          & $\checkmark$ & $\checkmark$ &            & $\checkmark$ &                                  \\
\textcolor{red}{PKA $\to$ p38}  & {$\checkmark$} & {$\checkmark$} &                              &                              &                              \\
\textcolor{red}{PKA $\to$ JNK}  & {$\checkmark$} &  {$\checkmark$} &                              &                              &                              \\
PKC $\to$ Raf          & $\checkmark$ & $\checkmark$ &            &            &                                  \\
PKC $\to$ Mek          & $\checkmark$ & $\checkmark$ &            &            &                                  \\
PKC $\to$ PKA          &            & $\checkmark$ &            &            &                                  \\
PKC $\to$ p38          & $\checkmark$ & $\checkmark$ &            & $\checkmark$ &                                  \\
PKC $\to$ JNK          & $\checkmark$ & $\checkmark$ & $\checkmark$ & $\checkmark$ &                                  \\
\textcolor{red}{p38 $\to$ JNK}  &      &         &             & {$\checkmark$}  & $\checkmark$                 \\
p38 $\to$ PKC          &            &            &            & $\checkmark$ & $\checkmark$                            \\
JNK $\to$ PKC          &            &            &            & $\checkmark$ &                                  \\
\textcolor{red}{JNK $\to$ p38}  &      &       &        & {$\checkmark$}  &                              \\
\midrule
Total                  & 18 & 17 & 7 & 13 & 8 \\
\bottomrule
\end{tabular}
\caption{Edges comparison across two reference graphs, graph recovered by ICP, hiddenICP and TSCD.
TSCD recovers a sparse set of edges that is consistent with the other graphs. Red entries are non-identifiable edges, between 
$\{\text{Raf}, \text{PLCg}, \text{Erk}, \text{PKA}, \text{p38}, \text{JNK}\}$, which TSCD cannot identify.}
\label{tab:sachs-edges}
\end{table}

\section{Conclusion}
We propose a tensor-based method for causal discovery from covariance matrices, across multiple intervention contexts, to recover both a causal graph and edge weights. Our approach is identifiable under a logarithmic number of intervention contexts and performs well in linear and nonlinear settings, scaling to graphs with hundreds of nodes.

Our method relies on assumptions such as perfect interventions and uncorrelated noise, which may be violated in real-world data. 
Future directions include extending the framework to handle other types of interventions and latent confounding.
These could include noise-shift interventions~\cite{rothenhausler2015backshift}, which also admit a tensor decomposition structure, and soft interventions~\cite{markowetz2005probabilistic} which modify rather than remove edge weights.

TSCD relies only on second-order statistics through covariance and precision matrices. Incorporating first or higher-order statistics is another interesting direction for future work.

\bigskip

\begin{footnotesize} 
\noindent \textbf{Acknowledgments.}  
NO received support from the Harvard Office of Undergraduate Research and Fellowships. AS was partially supported by NSF DMS 2608217 and an Alfred P. Sloan research fellowship.
\end{footnotesize}

\bibliographystyle{alpha}
\bibliography{references}

@article{peters2014identifiability,
  title={Identifiability of Gaussian structural equation models with equal error variances},
  author={Peters, Jonas and B{\"u}hlmann, Peter},
  journal={Biometrika},
  volume={101},
  number={1},
  pages={219--228},
  year={2014},
  publisher={Oxford University Press}
}

@article{hoyer2008nonlinear,
  title={Nonlinear causal discovery with additive noise models},
  author={Hoyer, Patrik and Janzing, Dominik and Mooij, Joris M and Peters, Jonas and Sch{\"o}lkopf, Bernhard},
  journal={Advances in neural information processing systems},
  volume={21},
  year={2008}
}

@inproceedings{monti2020causal,
  title={Causal discovery with general non-linear relationships using non-linear ICA},
  author={Monti, Ricardo Pio and Zhang, Kun and Hyv{\"a}rinen, Aapo},
  booktitle={Uncertainty in artificial intelligence},
  pages={186--195},
  year={2020},
  organization={PMLR}
}

@inproceedings{zheng2020learning,
  title={Learning sparse nonparametric dags},
  author={Zheng, Xun and Dan, Chen and Aragam, Bryon and Ravikumar, Pradeep and Xing, Eric},
  booktitle={International conference on artificial intelligence and statistics},
  pages={3414--3425},
  year={2020},
  organization={Pmlr}
}

@inproceedings{eberhardt2012number,
  title={On the number of experiments sufficient and in the worst case necessary to identify all causal relations among $N$ variables},
  author={Eberhardt, Frederick and Glymour, Clark and Scheines, Richard},
  booktitle={Proceedings of the Twenty-First Conference on Uncertainty in Artificial Intelligence},
  pages={178--184},
  year={2005}
}

@article{wang2023lower,
  title={Lower bounds on the rank and symmetric rank of real tensors},
  author={Wang, Kexin and Seigal, Anna},
  journal={Journal of Symbolic Computation},
  volume={118},
  pages={69--92},
  year={2023},
  publisher={Elsevier}
}

@article{de2006link,
  title={A link between the canonical decomposition in multilinear algebra and simultaneous matrix diagonalization},
  author={De Lathauwer, Lieven},
  journal={SIAM journal on Matrix Analysis and Applications},
  volume={28},
  number={3},
  pages={642--666},
  year={2006},
  publisher={SIAM}
}

@inproceedings{johnston2023computing,
  title={Computing linear sections of varieties: quantum entanglement, tensor decompositions and beyond},
  author={Johnston, Nathaniel and Lovitz, Benjamin and Vijayaraghavan, Aravindan},
  booktitle={2023 IEEE 64th Annual Symposium on Foundations of Computer Science (FOCS)},
  pages={1316--1336},
  year={2023},
  organization={IEEE}
}

@article{wang2025multi,
  title={Multi-subspace power method for decomposing all tensors},
  author={Wang, Kexin and Pereira, Jo{\~a}o M and Kileel, Joe and Seigal, Anna},
  journal={arXiv preprint arXiv:2510.18627},
  year={2025}
}

@article{dixit2016perturb,
  title={Perturb-Seq: dissecting molecular circuits with scalable single-cell {RNA} profiling of pooled genetic screens},
  author={Dixit, Atray and Parnas, Oren and Li, Biyu and Chen, Jenny and Fulco, Charles P and Jerby-Arnon, Livnat and Marjanovic, Nemanja D and Dionne, Danielle and Burks, Tyler and Raychowdhury, Raktima and others},
  journal={cell},
  volume={167},
  number={7},
  pages={1853--1866},
  year={2016},
  publisher={Elsevier}
}

@book{spirtes2000causation,
  title={Causation, prediction, and search},
  author={Spirtes, Peter and Glymour, Clark N and Scheines, Richard},
  year={2000},
  publisher={MIT press}
}

@article{shimizu2006lingam,
  title={A Linear Non-{G}aussian Acyclic Model for Causal Discovery},
  author={Shimizu, Shohei and Hoyer, Patrik O. and Hyvärinen, Aapo and Kerminen, Antti},
  journal={Journal of Machine Learning Research},
  volume={7},
  pages={2003--2030},
  year={2006}
}

@book{peters2017elements,
  title={Elements of causal inference: foundations and learning algorithms},
  author={Peters, Jonas and Janzing, Dominik and Scholkopf, Bernhard},
  year={2017},
  publisher={MIT press}
}

@book{landsberg2011tensors,
  title={Tensors: geometry and applications: geometry and applications},
  author={Landsberg, Joseph M},
  volume={128},
  year={2011},
  publisher={American Mathematical Soc.}
}

@article{wright1934method,
  title={The method of path coefficients},
  author={Wright, Sewall},
  journal={The annals of mathematical statistics},
  volume={5},
  number={3},
  pages={161--215},
  year={1934},
  publisher={JSTOR}
}

@article{kolda2009tensor,
  title={Tensor decompositions and applications},
  author={Kolda, Tamara G and Bader, Brett W},
  journal={SIAM review},
  volume={51},
  number={3},
  pages={455--500},
  year={2009},
  publisher={SIAM}
}

@article{sorber2015structured,
  title={Structured data fusion},
  author={Sorber, Laurent and Van Barel, Marc and De Lathauwer, Lieven},
  journal={IEEE journal of selected topics in signal processing},
  volume={9},
  number={4},
  pages={586--600},
  year={2015},
  publisher={IEEE}
}

@article{chiantini2012generic,
  title={On generic identifiability of 3-tensors of small rank},
  author={Chiantini, Luca and Ottaviani, Giorgio},
  journal={SIAM Journal on Matrix Analysis and Applications},
  volume={33},
  number={3},
  pages={1018--1037},
  year={2012},
  publisher={SIAM}
}

@article{drton2011global,
  title={Global identifiability of linear structural equation models},
  author={Drton, Mathias and Foygel, Rina and Sullivant, Seth},
  journal={The Annals of Statistics},
  volume={39},
  number={2},
  pages={865--886},
  year={2011}
}

@inproceedings{squires2023linear,
  title={Linear causal disentanglement via interventions},
  author={Squires, Chandler and Seigal, Anna and Bhate, Salil S and Uhler, Caroline},
  booktitle={International conference on machine learning},
  pages={32540--32560},
  year={2023},
  organization={PMLR}
}

@article{shimizu2011directlingam,
  title={DirectLiNGAM: A Direct Method for Learning a Linear Non-{G}aussian Structural Equation Model},
  author={Shimizu, Shohei and Inazumi, Takanori and Sogawa, Yasuhiro and Hyvärinen, Aapo and Kawahara, Yoshinobu and Washio, Takashi and Hoyer, Patrik O. and Bollen, Kenneth},
  journal={Journal of Machine Learning Research},
  volume={12},
  pages={1225--1248},
  year={2011}
}

@article{chickering2002optimal,
  title={Optimal Structure Identification with Greedy Search},
  author={Chickering, David M.},
  journal={Journal of Machine Learning Research},
  volume={3},
  pages={507--554},
  year={2002}
}

@article{heurtebise2025identifiable,
  title={Identifiable multi-view causal discovery without non-gaussianity},
  author={Heurtebise, Ambroise and Chehab, Omar and Ablin, Pierre and Gramfort, Alexandre and Hyv{\"a}rinen, Aapo},
  journal={arXiv e-prints},
  pages={arXiv--2502},
  year={2025}
}

@article{hauser2012characterization,
  title={Characterization and Greedy Learning of Interventional {M}arkov Equivalence Classes of Directed Acyclic Graphs},
  author={Hauser, Alain and Bühlmann, Peter},
  journal={Journal of Machine Learning Research},
  volume={13},
  pages={2409--2464},
  year={2012}
}

@article{foygel2012half,
  title={Half-trek criterion for generic identifiability of linear structural equation models},
  author={Foygel, Rina and Draisma, Jan and Drton, Mathias},
  journal={The Annals of Statistics},
  pages={1682--1713},
  year={2012},
  publisher={JSTOR}
}

@inproceedings{zheng2018dags,
  title={{DAG}s with {NO} {TEARS}: Continuous Optimization for Structure Learning},
  author={Zheng, Xun and Aragam, Bryon and Ravikumar, Pradeep and Xing, Eric P.},
  booktitle={Advances in Neural Information Processing Systems},
  year={2018}
}

@article{reisach2021beware,
  title={Beware of the simulated {DAG}! causal discovery benchmarks may be easy to game},
  author={Reisach, Alexander and Seiler, Christof and Weichwald, Sebastian},
  journal={Advances in Neural Information Processing Systems},
  volume={34},
  pages={27772--27784},
  year={2021}
}

@article{peters2016causal,
  title={Causal Inference using Invariant Prediction: Identification and Confidence Intervals},
  author={Peters, Jonas and Bühlmann, Peter and Meinshausen, Nicolai},
  journal={Journal of the Royal Statistical Society: Series B},
  volume={78},
  number={5},
  pages={947--1012},
  year={2016}
}

@article{kruskal1977three,
  title={Three-way arrays: rank and uniqueness of trilinear decompositions, with application to arithmetic complexity and statistics},
  author={Kruskal, Joseph B},
  journal={Linear algebra and its applications},
  volume={18},
  number={2},
  pages={95--138},
  year={1977},
  publisher={Elsevier}
}

@article{student1908probable,
  title={The probable error of a mean},
  author={Student},
  journal={Biometrika},
  pages={1--25},
  year={1908},
  publisher={JSTOR}
}

@article{fisher1915frequency,
  title={Frequency distribution of the values of the correlation coefficient in samples from an indefinitely large population},
  author={Fisher, Ronald A},
  journal={Biometrika},
  volume={10},
  number={4},
  pages={507--521},
  year={1915},
  publisher={JSTOR}
}

@inproceedings{friedman2000using,
  title={Using {B}ayesian networks to analyze expression data},
  author={Friedman, Nir and Linial, Michal and Nachman, Iftach and Pe'er, Dana},
  booktitle={Proceedings of the fourth annual international conference on Computational molecular biology},
  pages={127--135},
  year={2000}
}

@article{sachs2005causal,
  title={Causal protein-signaling networks derived from multiparameter single-cell data},
  author={Sachs, Karen and Perez, Omar and Pe'er, Dana and Lauffenburger, Douglas A and Nolan, Garry P},
  journal={Science},
  volume={308},
  number={5721},
  pages={523--529},
  year={2005},
  publisher={American Association for the Advancement of Science}
}

@article{siddiqi2022causal,
  title={Causal mapping of human brain function},
  author={Siddiqi, Shan H and Kording, Konrad P and Parvizi, Josef and Fox, Michael D},
  journal={Nature reviews neuroscience},
  volume={23},
  number={6},
  pages={361--375},
  year={2022},
  publisher={Nature Publishing Group UK London}
}

@article{imbens2020potential,
  title={Potential outcome and directed acyclic graph approaches to causality: Relevance for empirical practice in economics},
  author={Imbens, Guido W},
  journal={Journal of Economic Literature},
  volume={58},
  number={4},
  pages={1129--1179},
  year={2020},
  publisher={American Economic Association 2014 Broadway, Suite 305, Nashville, TN 37203-2425}
}

@article{greenland1999causal,
  title={Causal diagrams for epidemiologic research},
  author={Greenland, Sander and Pearl, Judea and Robins, James M},
  journal={Epidemiology},
  volume={10},
  number={1},
  pages={37--48},
  year={1999},
  publisher={LWW}
}

@article{scholkopf2021toward,
  title={Toward causal representation learning},
  author={Sch{\"o}lkopf, Bernhard and Locatello, Francesco and Bauer, Stefan and Ke, Nan Rosemary and Kalchbrenner, Nal and Goyal, Anirudh and Bengio, Yoshua},
  journal={Proceedings of the IEEE},
  volume={109},
  number={5},
  pages={612--634},
  year={2021},
  publisher={IEEE}
}

@article{wang2026multi,
  title={Multi-context principal component analysis},
  author={Wang, Kexin and Bhate, Salil and Pereira, Jo{\~a}o M and Kileel, Joe and Figlerowicz, Matylda and Seigal, Anna},
  journal={arXiv preprint arXiv:2601.15239},
  year={2026}
}

@article{meinshausen2016methods,
  title={Methods for causal inference from gene perturbation experiments and validation},
  author={Meinshausen, Nicolai and Hauser, Alain and Mooij, Joris M and Peters, Jonas and Versteeg, Philip and B{\"u}hlmann, Peter},
  journal={Proceedings of the National Academy of Sciences},
  volume={113},
  number={27},
  pages={7361--7368},
  year={2016},
  publisher={National Academy of Sciences}
}

@inproceedings{itani2010structure,
  title={Structure learning in causal cyclic networks},
  author={Itani, Sleiman and Ohannessian, Mesrob and Sachs, Karen and Nolan, Garry P and Dahleh, Munther A},
  booktitle={Causality: Objectives and assessment},
  pages={165--176},
  year={2010},
  organization={PMLR}
}

@inproceedings{verma2022equivalence,
  title={Equivalence and synthesis of causal models},
  author={Verma, Thomas and Pearl, Judea},
  booktitle={Proceedings of the Sixth Annual Conference on Uncertainty in Artificial Intelligence},
  pages={255--270},
  year={1990}
}

@article{wang2017permutation,
  title={Permutation-based causal inference algorithms with interventions},
  author={Wang, Yuhao and Solus, Liam and Yang, Karren and Uhler, Caroline},
  journal={Advances in neural information processing systems},
  volume={30},
  year={2017}
}

@article{schultheiss2023ancestor,
  title={Ancestor regression in linear structural equation models},
  author={Schultheiss, Christoph and B{\"u}hlmann, Peter},
  journal={Biometrika},
  volume={110},
  number={4},
  pages={1117--1124},
  year={2023},
  publisher={Oxford University Press}
}

@article{rothenhausler2015backshift,
  title={BACKSHIFT: Learning causal cyclic graphs from unknown shift interventions},
  author={Rothenh{\"a}usler, Dominik and Heinze, Christina and Peters, Jonas and Meinshausen, Nicolai},
  journal={Advances in neural information processing systems},
  volume={28},
  year={2015}
}

@book{pearl2009causality,
  title={Causality},
  author={Pearl, Judea},
  edition={2},
  year={2009},
  publisher={Cambridge University Press}
}

@inproceedings{markowetz2005probabilistic,
  title={Probabilistic soft interventions in conditional {G}aussian networks},
  author={Markowetz, Florian and Grossmann, Steffen and Spang, Rainer},
  booktitle={International Workshop on Artificial Intelligence and Statistics},
  pages={214--221},
  year={2005},
  organization={PMLR}
}

@article{gamella2025causal,
  title={Causal chambers as a real-world physical testbed for {AI} methodology},
  author={Gamella, Juan L and Peters, Jonas and B{\"u}hlmann, Peter},
  journal={Nature Machine Intelligence},
  volume={7},
  number={1},
  pages={107--118},
  year={2025},
  publisher={Nature Publishing Group UK London}
}

@article{brito2002new,
  title={A new identification condition for recursive models with correlated errors},
  author={Brito, Carlos and Pearl, Judea},
  journal={Structural Equation Modeling},
  volume={9},
  number={4},
  pages={459--474},
  year={2002},
  publisher={Taylor \& Francis}
}

@book{sullivant2023algebraic,
  title={Algebraic statistics},
  author={Sullivant, Seth},
  volume={194},
  year={2023},
  publisher={American Mathematical Society}
}

@book{wedderburn1934lectures,
  title={Lectures on matrices},
  author={Wedderburn, Joseph Henry Maclagan},
  volume={17},
  year={1934},
  publisher={American Mathematical Soc.}
}

@article{kileel2025subspace,
  title={Subspace power method for symmetric tensor decomposition},
  author={Kileel, Joe and Pereira, Jo{\~a}o M},
  journal={Numerical Algorithms},
  pages={1--38},
  year={2025},
  publisher={Springer}
}

@article{ranestad2026real,
  title={A real generalized trisecant trichotomy},
  author={Ranestad, Kristian and Seigal, Anna and Wang, Kexin},
  journal={Journal of Algebra},
  year={2026},
  publisher={Elsevier}
}


\appendix

\section{Algorithm Details}

\subsection{Hypothesis testing for zero correlation}\label{app:hypothesis}

The pairwise intervention-asymmetric tests whether a sample correlation is effectively zero. Finite sample can make small correlations appear nonzero (and vice versa), so we use a three-way classification:
\[
\texttt{NONZERO}, \quad \texttt{ZERO}, \quad \texttt{INCONCLUSIVE}.
\]
This allows us to avoid overconfident conclusions when the data are ambiguous.

\paragraph{\em Testing for nonzero correlation.}
Given a sample correlation \(\hat\rho\) from \(N\) samples, we test
\[
H_0:\rho=0
\qquad\text{against}\qquad
H_1:\rho\neq 0.
\]
We use the standard \(t\)-statistic
\[
t=\hat\rho\sqrt{\frac{N-2}{1-\hat\rho^2}},
\]
which follows a \(t\)-distribution with \(N-2\) degrees of freedom under \(H_0\)~\cite{student1908probable}. If the two-sided \(p\)-value is less than a significance level \(\alpha\), we classify the correlation as \texttt{NONZERO}.

\paragraph{\em Testing for effective zero correlation.}
If the null is not rejected, we next test the correlation using an equivalence test. Fix a tolerance \(\epsilon>0\), and test
\[
H_0: |\rho|\geq \epsilon
\qquad\text{against}\qquad
H_1: |\rho|<\epsilon.
\]
We implement this using two one-sided tests. Using Fisher's \(z\)-transform~\cite{fisher1915frequency},
\[
z=\operatorname{arctanh}(\hat\rho),
\qquad
z_\pm=\operatorname{arctanh}(\pm \epsilon),
\]
with standard error
\[
\mathrm{SE}=\frac{1}{\sqrt{N-3}},
\]
we compute
\[
Z_-=\frac{z-z_-}{\mathrm{SE}},
\qquad
Z_+=\frac{z-z_+}{\mathrm{SE}}.
\]
If
\(
Z_->z_{1-\alpha}\)
and \(
Z_+<-z_{1-\alpha}\),
we reject \(H_0\) and classify the correlation as \texttt{ZERO}.

\paragraph{\em Inconclusive cases.}
If neither test is decisive, we classify the correlation as \texttt{INCONCLUSIVE}. This occurs when the data cannot distinguish between zero and nonzero correlation.

\paragraph{\em Summary.}
We classify each sample correlation as follows:
\begin{itemize}
    \item \texttt{NONZERO}: reject \(H_0:\rho=0\),
    \item \texttt{ZERO}: accept equivalence \(|\rho|<\epsilon\),
    \item \texttt{INCONCLUSIVE}: neither test is decisive.
\end{itemize}
This classification is used in the pairwise intervention-asymmetric test to infer causal order.

\subsection{From causal order to adjacency matrix}\label{sec:estimate Lambda}

Given a causal order \(\pi = (\pi_1,\ldots,\pi_p)\), recovering the adjacency matrix \(\Lambda\) reduces to estimating the coefficients of each node from its ancestors in the order. For each node \(\pi_t\), its parents lie in \(\{\pi_1,\ldots,\pi_{t-1}\}\), so estimation can be restricted to this set.
We consider the following approaches. 

\paragraph{\em Lasso regression.}
For each node \(\pi_t\), we regress \(x_{\pi_t}\) onto its ancestors \(\{x_{\pi_1},\ldots,x_{\pi_{t-1}}\}\) using Lasso. The \(\ell_1\)-penalty encourages sparse solutions. After identifying the support, one can optionally refit an ordinary least squares regression restricted to the parents to obtain less biased estimates. The resulting coefficients define the corresponding row of \(\Lambda\).

\paragraph{\em Regression.}
For each \(\pi_t\), we regress \(x_{\pi_t}\) onto its ancestors. The coefficients define the corresponding row of \(\Lambda\). We set entries with magnitude below a threshold to zero.

\paragraph{\em Cholesky decomposition.}
The causal order induces a permutation under which the matrix $(I - \Lambda)$ is lower triangular. Applying a Cholesky decomposition to the permuted precision matrix yields a dense estimate of the structural coefficients. To obtain a sparse adjacency matrix, we set entries with magnitude below a threshold to zero.

The Lasso approach requires access to individual samples; the other two methods do not.

\section{Experiment details}\label{app:exp_details}

\subsection{Linear SEM} \label{app:LSEM}

For each experiment instance, we generate a random DAG with \(p=10\) nodes. We first sample a weighted adjacency matrix \(\Lambda \in \mathbb{R}^{p \times p}\), where each edge is included independently with probability \(0.6\). Nonzero edge weights are drawn uniformly from \([0.4,1]\) with random signs. To enforce acyclicity, we construct \(\Lambda\) to be lower-triangular and then apply a random permutation.

Noise variables \(\epsilon_i\) are independent across nodes, with variances sampled i.i.d.\ from \(\mathrm{U}[0.1,1]\). We consider three noise settings:
(i) fully Gaussian noise,
(ii) half of the nodes have Gaussian noise and half follow a Student-\(t_5\) distribution,
and (iii) fully Student-\(t_5\) noise.

We use the intervention-pattern matrix 
\[
B =
\begin{pmatrix}
1 & 1 & 1 & 1 & 1 \\
1 & 1 & 1 & 1 & 0 \\
1 & 1 & 1 & 0 & 1 \\
1 & 1 & 1 & 0 & 0 \\
1 & 1 & 0 & 1 & 1 \\
1 & 1 & 0 & 1 & 0 \\
1 & 1 & 0 & 0 & 1 \\
1 & 1 & 0 & 0 & 0 \\
1 & 0 & 1 & 1 & 1 \\
1 & 0 & 1 & 1 & 0
\end{pmatrix} \in \RR^{10 \times 5}.
\]
Each row gives the intervention-pattern vector across the $5$ contexts.

For each context \(j\), we generate samples from the corresponding interventional LSEM. Under perfect interventions, all incoming edges to intervened nodes are removed and their structural equations are replaced with exogenous noise. Samples are generated by solving the linear system
\[
(I - \Lambda^{(j)}) X^{(j)} = E^{(j)},
\]
where \(\Lambda^{(j)}\) is the modified adjacency matrix for context \(j\), and \(E^{(j)}\) contains independent noise samples.
Each context contains the same number of samples, ranging from \(100\) to \(10{,}000\). Each experimental configuration is repeated over 30 independent trials.
The root is selected using a candidate set of size $3$ (Algorithm \ref{alg:pairwise_asym_corr}). 
From the causal order, $\Lambda$ is estimated using Lasso regression.

\subsection{Scalability}
\label{app:scalability_details}

We vary the number of nodes \(p \in \{50,60,\dots,400\}\). For each \(p\), we generate a random DAG with edge probability \(6/(p-1)\), yielding an expected in-degree of around \(3\). Edge weights are drawn from \(\mathrm{Unif}([-1,-0.1]\cup[0.1,1])\), and noise variances from \(\mathrm{Unif}(0.01,0.05)\), with Gaussian noise.
We use a binary-code design with \(L=\lceil \log_2 p \rceil + 1\) contexts, including one observational context. Each context contains \(p^2\) samples.
The root is selected using a candidate set of size $3$ (Algorithm \ref{alg:pairwise_asym_corr}). The experimental configuration was repeated over 10 independent trials. Given the recovered causal order, we estimate the adjacency matrix via regression in the observational context with threshold \(0.1\).
Performance is evaluated using the F1 score on the recovered adjacency support.

\subsection{Nonlinear SEM}

We consider nonlinear SEMs with perfect interventions on random DAGs with \(p=10\) nodes and edge probability \(0.8\).
Data are generated by
\[
x_i = f_i(x_{\mathrm{pa}(i)}) + \epsilon_i,
\]
where \(f_i\) is a randomly generated nonlinear function and \(\epsilon_i\) is Gaussian noise. Each \(f_i\) is implemented as a one-hidden-layer neural network with randomly initialized weights and a combination of activation functions, including tanh, ReLU, and square activations.

Under interventions, if a node appears as a parent in a downstream structural equation, its input is masked to zero in that context, removing its influence. The structural equation of an intervened node is replaced by an exogenous noise term.

We construct the causal order greedily. The initial root is selected based on stability of marginal variances across contexts where the node is not intervened on. At each subsequent step, for each candidate node, we train a feedforward neural network to predict the node from previously selected variables, using data pooled across contexts where the node is not intervened on. If a parent variable is intervened on in a context, its input is set to zero.
To estimate the parent set, we use a gated neural network, where gates act as feature-selection weights. Variables with gate logits above a threshold are retained as parents. The model is then optionally refit using only the selected parents; if this refit improves validation loss, the reduced parent set is accepted. Otherwise, the full previously selected set is kept.

Node selection is based on stability of prediction error across contexts. For each candidate node, we compute the mean squared error (MSE) in each valid context and select the node with the most stable normalized MSE, measured by the variance across contexts with a small penalty on the mean.

\subsection{Flow-cytometry Dataset}

We evaluate our method on the flow-cytometry dataset \cite{sachs2005causal}, which records single-cell protein phosphorylation.
for $p = 11$ signaling proteins across nine experimental conditions. We (i) pool the two unperturbed baselines (anti-CD3/CD28 with and without ICAM-2 co-stimulation) into a single observational environment, and (ii) restrict the intervention contexts to the five kinase-inhibitor conditions, each targeting a single protein (Akt, PKC, PIP2, Mek, PIP3 via PI3K). 
We exclude the two pharmacological-activator conditions (PMA and $\beta$2-cAMP), because inhibitors more closely approximate a perfect intervention while for activators, the target remains conditionally dependent on its parents. 
This yields six environments and $5{,}846$ cells in total. 
We compare the graph recovered by our method (Table~\ref{tab:sachs-edges}) with two reference DAGs from \cite{sachs2005causal} and the recovered graphs of ICP and hiddenICP~\cite{peters2016causal,meinshausen2016methods}.

\subsection{Light Causal Chamber}

We use the Causal Chamber light-tunnel dataset with single node interventions \texttt{lt\_interventions\_standard\_v1} from \cite{gamella2025causal}. The selected variables and dataset columns are
\[
\{\texttt{red}, \texttt{green}, \texttt{blue}, \texttt{ir\_1}, \texttt{vis\_1},
\texttt{l\_11}, \texttt{l\_12}, \texttt{diode\_vis\_1}, \texttt{t\_vis\_1}\},
\]
corresponding to
\(
R, G, B, \tilde I_1, \tilde V_1, L_{11}, L_{12}, D^V_1, T^V_1\).
The variables \(R,G,B\) control the main red, green, and blue LEDs, while \(L_{11},L_{12}\) control the two auxiliary LEDs placed by the first light-intensity sensor. The variables \(D^V_1\) and \(T^V_1\) are visible-light sensor parameters for the first sensor, controlling the selected visible photodiode and the photodiode exposure time. The variables \(\tilde I_1\) and \(\tilde V_1\) are sensor readings: the infrared and visible-light intensity measurements from the first sensor.

The selected ground-truth graph contains twelve directed edges:
\[
R,G,B,L_{11},L_{12} \to \tilde I_1,
\qquad
R,G,B,L_{11},L_{12},D^V_1,T^V_1 \to \tilde V_1.
\]
We use the reference experiment together with interventions on the selected actuator and sensor-parameter variables:
\[
\begin{aligned}
&\texttt{uniform\_reference},\ 
\texttt{uniform\_red\_strong},\ 
\texttt{uniform\_green\_strong},\\ 
&\texttt{uniform\_blue\_strong}, \
\texttt{uniform\_t\_vis\_1\_strong},\ 
\texttt{uniform\_l\_11\_mid},\\
&\texttt{uniform\_l\_12\_mid},\ 
\texttt{uniform\_diode\_vis\_1\_mid}.
\end{aligned}
\]

TSCD recovers the six edges from \(R,G,B\) to \(\tilde I_1\) and \(\tilde V_1\), corresponding to the strongest approximately linear effects in the subsystem. 
For TSCD-nonlinear, we set gate threshold to \(2\). It recovers all twelve ground-truth edges and adds three false positives, giving precision \(12/15=0.800\) and recall \(12/12=1.000\).

For the other methods, we add a small independent Gaussian noise \(10^{-5}\epsilon\), with \(\epsilon \sim \mathcal{N}(0,1)\), to each data entry before running the algorithms. This is necessary because some variables have zero empirical variance in certain intervention context, which causes numerical failures for several baselines. The perturbation is negligible relative to the scale of the measurements and is used only to help the other methods.

\subsection{Root Projection Scores}
\label{app:root projection scores}

We study the projection score of roots, specifically, where the true roots appear in the ranking of nodes by projection score. In the population setting, the roots would all have projection score equal to $1$ and appear at the top, but in practice they can be lower down the list. How far down they may appear guides the number of candidates that should be used for candidate root selection in Section~\ref{sec: proj_norm}. The top scoring root has a better rank when the graph is more sparse, since there are more roots.

\begin{figure}[htbp]
    \centering

\begin{tikzpicture}

\begin{axis}[
    name=leftplot,
    width=6.2cm, height=5.0cm,
    title={(a) $e=0.6$},
    ylabel={Rank},
    xlabel={$p$},
    ymin=0.5, ymax=16.5,
    y dir=reverse,
    grid=major,
    grid style={line width=.1pt, draw=gray!30},
    major grid style={line width=.2pt, draw=gray!50},
    tick label style={font=\footnotesize},
    label style={font=\footnotesize},
    title style={font=\small},
    xtick={1, 2, 3, 4, 5, 6},
    xticklabels={$5$, $10$, $15$, $20$, $25$, $30$},
    x tick label style={font=\scriptsize},
    boxplot/draw direction=y,
    boxplot/box extend=0.45,
]

\addplot+[
    boxplot prepared={
        draw position=1,
        lower whisker=1,
        lower quartile=1,
        median=1,
        upper quartile=1,
        upper whisker=1
    },
    fill=blue!25,
    draw=blue!60!black,
    solid,
    mark=*,
    mark size=1pt,
    mark options={fill=blue!60!black, draw=blue!60!black, solid}
] coordinates {};

\addplot+[
    boxplot prepared={
        draw position=2,
        lower whisker=1,
        lower quartile=1,
        median=1,
        upper quartile=1,
        upper whisker=1
    },
    fill=blue!25,
    draw=blue!60!black,
    solid,
    mark=*,
    mark size=1pt,
    mark options={fill=blue!60!black, draw=blue!60!black, solid}
] coordinates {(2,2) (2,2) (2,3) (2,3) (2,3) (2,6)};

\addplot+[
    boxplot prepared={
        draw position=3,
        lower whisker=1,
        lower quartile=1,
        median=1,
        upper quartile=1,
        upper whisker=1
    },
    fill=blue!25,
    draw=blue!60!black,
    solid,
    mark=*,
    mark size=1pt,
    mark options={fill=blue!60!black, draw=blue!60!black, solid}
] coordinates {(3,2) (3,2) (3,2) (3,2) (3,2) (3,2) (3,3) (3,3) (3,4) (3,4) (3,5) (3,7)};

\addplot+[
    boxplot prepared={
        draw position=4,
        lower whisker=1,
        lower quartile=1,
        median=1,
        upper quartile=2,
        upper whisker=3
    },
    fill=blue!25,
    draw=blue!60!black,
    solid,
    mark=*,
    mark size=1pt,
    mark options={fill=blue!60!black, draw=blue!60!black, solid}
] coordinates {(4,4) (4,6) (4,6) (4,6)};

\addplot+[
    boxplot prepared={
        draw position=5,
        lower whisker=1,
        lower quartile=1,
        median=1,
        upper quartile=2,
        upper whisker=2
    },
    fill=blue!25,
    draw=blue!60!black,
    solid,
    mark=*,
    mark size=1pt,
    mark options={fill=blue!60!black, draw=blue!60!black, solid}
] coordinates {(5,4) (5,4) (5,4) (5,4) (5,5) (5,5) (5,6) (5,10) (5,11)};

\addplot+[
    boxplot prepared={
        draw position=6,
        lower whisker=1,
        lower quartile=1,
        median=2,
        upper quartile=3,
        upper whisker=5
    },
    fill=blue!25,
    draw=blue!60!black,
    solid,
    mark=*,
    mark size=1pt,
    mark options={fill=blue!60!black, draw=blue!60!black, solid}
] coordinates {(6,7) (6,7) (6,8) (6,16)};

\end{axis}

\begin{axis}[
    name=rightplot,
    at={(leftplot.outer east)},
    anchor=outer west,
    xshift=0.35cm,
    width=6.2cm, height=5.0cm,
    title={(b) $e=0.8$},
    ylabel={},
    xlabel={$p$},
    ymin=0.5, ymax=16.5,
    y dir=reverse,
    grid=major,
    grid style={line width=.1pt, draw=gray!30},
    major grid style={line width=.2pt, draw=gray!50},
    tick label style={font=\footnotesize},
    label style={font=\footnotesize},
    title style={font=\small},
    xtick={1, 2, 3, 4, 5, 6},
    xticklabels={$5$, $10$, $15$, $20$, $25$, $30$},
    x tick label style={font=\scriptsize},
    boxplot/draw direction=y,
    boxplot/box extend=0.45,
]

\addplot+[
    boxplot prepared={
        draw position=1,
        lower whisker=1,
        lower quartile=1,
        median=1,
        upper quartile=1,
        upper whisker=1
    },
    fill=blue!25,
    draw=blue!60!black,
    solid,
    mark=*,
    mark size=1pt,
    mark options={fill=blue!60!black, draw=blue!60!black, solid}
] coordinates {(1,2) (1,2)};

\addplot+[
    boxplot prepared={
        draw position=2,
        lower whisker=1,
        lower quartile=1,
        median=1,
        upper quartile=1.75,
        upper whisker=2
    },
    fill=blue!25,
    draw=blue!60!black,
    solid,
    mark=*,
    mark size=1pt,
    mark options={fill=blue!60!black, draw=blue!60!black, solid}
] coordinates {(2,3) (2,3) (2,3) (2,4) (2,4) (2,5)};

\addplot+[
    boxplot prepared={
        draw position=3,
        lower whisker=1,
        lower quartile=1,
        median=1,
        upper quartile=2,
        upper whisker=3
    },
    fill=blue!25,
    draw=blue!60!black,
    solid,
    mark=*,
    mark size=1pt,
    mark options={fill=blue!60!black, draw=blue!60!black, solid}
] coordinates {(3,4) (3,4) (3,4) (3,4) (3,6) (3,6) (3,7) (3,8) (3,10)};

\addplot+[
    boxplot prepared={
        draw position=4,
        lower whisker=1,
        lower quartile=1,
        median=1,
        upper quartile=3,
        upper whisker=6
    },
    fill=blue!25,
    draw=blue!60!black,
    solid,
    mark=*,
    mark size=1pt,
    mark options={fill=blue!60!black, draw=blue!60!black, solid}
] coordinates {(4,7)};

\addplot+[
    boxplot prepared={
        draw position=5,
        lower whisker=1,
        lower quartile=1,
        median=1.5,
        upper quartile=3,
        upper whisker=5
    },
    fill=blue!25,
    draw=blue!60!black,
    solid,
    mark=*,
    mark size=1pt,
    mark options={fill=blue!60!black, draw=blue!60!black, solid}
] coordinates {(5,7) (5,7) (5,8)};

\addplot+[
    boxplot prepared={
        draw position=6,
        lower whisker=1,
        lower quartile=1,
        median=2.5,
        upper quartile=4,
        upper whisker=7
    },
    fill=blue!25,
    draw=blue!60!black,
    solid,
    mark=*,
    mark size=1pt,
    mark options={fill=blue!60!black, draw=blue!60!black, solid}
] coordinates {(6,9) (6,9) (6,15) (6,16)};

\end{axis}

\end{tikzpicture}
    \caption{The distribution of best ranks for root nodes in projection score, as 
  $p$ (the number of nodes) varies, for two values of $e$ (the edge probability), cf. Section~\ref{app:LSEM}.
  All experiments were conducted with 1000 samples, half of the nodes given Gaussian noise and the rest given Student-$t_5$ distribution noise.
  }
  \label{fig:root projection scores}
\end{figure}

\subsection{Ablation}
\label{app:ablation}
We perform an ablation study using the LSEM data generation setup and hyperparameters in Section~\ref{app:LSEM}. At each step, TSCD selects a candidate set of possible root nodes, then tests pairwise correlations to make a choice of root. We compare it to two ablated versions. The first only uses projection norm as a score to select the root at each step, i.e. there is only one root candidate.
The second only uses pairwise correlation testing to determine the root at each step, i.e. all nodes are root candidates. 

Figure~\ref{fig:ablation} reports these results. TSCD has improved performance compared to the ablated algorithms at low to moderate sample sizes. This justifies the importance of including both steps: selecting root candidates using projection norm and using pairwise correlations to choose the root from the candidates.

\begin{figure}[bp]
  \centering
  \resizebox{0.85\linewidth}{!}{
\begin{tikzpicture}
\begin{groupplot}[
    group style={group size=3 by 1, horizontal sep=1.6cm},
    width=6.2cm, height=5.2cm,
    xmode=log, log basis x=10,
    xmin=90, xmax=11000,
    xlabel={Sample size per context},
    grid=both, grid style={line width=.1pt, draw=gray!30},
    major grid style={line width=.2pt, draw=gray!50},
    tick label style={font=\footnotesize},
    label style={font=\footnotesize},
    title style={font=\small},
    legend style={font=\scriptsize, draw=none, fill=none},
    legend cell align=left,
]
\nextgroupplot[
    title={Runtime (sec)},
    ylabel={Runtime (sec)},
    ymode=log, log basis y=10,
ytick={0.01, 0.1, 1},
    minor y tick num=0,
]
\addplot[color={rgb,1:red,1.000000;green,0.498039;blue,0.054902}, mark=square*, mark size=1.4pt, line width=0.8pt] coordinates {(100,0.12669847) (127,0.09629868) (162,0.10094557) (206,0.10221074) (263,0.10470151) (335,0.10572395) (428,0.1054622) (545,0.108753) (695,0.10909236) (885,0.11059021) (1128,0.11339255) (1438,0.11654429) (1832,0.12012738) (2335,0.12609681) (2976,0.13209809) (3792,0.14008609) (4832,0.14761604) (6158,0.16065065) (7847,0.17536166) (10000,0.19613596)};
\addplot[color={rgb,1:red,0.121569;green,0.466667;blue,0.705882}, mark=square*, mark size=1.4pt, line width=0.8pt] coordinates {(100,0.21814863) (127,0.22048082) (162,0.22263884) (206,0.22464362) (263,0.22777563) (335,0.22727818) (428,0.22821761) (545,0.22942188) (695,0.23097131) (885,0.23176763) (1128,0.23470931) (1438,0.23877479) (1832,0.24238573) (2335,0.24783855) (2976,0.25464063) (3792,0.26187467) (4832,0.27014812) (6158,0.28303118) (7847,0.29813139) (10000,0.31913434)};
\addplot[color={rgb,1:red,0.549020;green,0.337255;blue,0.294118}, mark=star, mark size=2.6pt, line width=1.2pt] coordinates {(100,0.11595316) (127,0.11796435) (162,0.12105814) (206,0.12317386) (263,0.12475445) (335,0.12434799) (428,0.12542122) (545,0.12770933) (695,0.12871637) (885,0.13022197) (1128,0.13291296) (1438,0.13627199) (1832,0.14046937) (2335,0.14586346) (2976,0.15244985) (3792,0.16005078) (4832,0.16826549) (6158,0.18098962) (7847,0.19604329) (10000,0.21873396)};
\nextgroupplot[
    title={Relative Frob.\ Error},
    ylabel={Relative Frob.\ Error},
]
\addplot[color={rgb,1:red,1.000000;green,0.498039;blue,0.054902}, mark=square*, mark size=1.4pt, line width=0.8pt] coordinates {(100,0.73139261) (127,0.66604103) (162,0.64048811) (206,0.52734554) (263,0.49293629) (335,0.43091542) (428,0.34664514) (545,0.32701225) (695,0.26042868) (885,0.20172062) (1128,0.19149836) (1438,0.13186636) (1832,0.09795992) (2335,0.09176342) (2976,0.06359648) (3792,0.08152757) (4832,0.02749625) (6158,0.05052777) (7847,0.01713123) (10000,0.00971442)};
\addplot[color={rgb,1:red,0.121569;green,0.466667;blue,0.705882}, mark=square*, mark size=1.4pt, line width=0.8pt] coordinates {(100,0.52423472) (127,0.49113365) (162,0.44565076) (206,0.44057961) (263,0.37565955) (335,0.32744746) (428,0.2850568) (545,0.25451866) (695,0.17500215) (885,0.12517185) (1128,0.11612685) (1438,0.09939223) (1832,0.09104526) (2335,0.06203653) (2976,0.07019744) (3792,0.02484194) (4832,0.04192715) (6158,0.05861001) (7847,0.03855969) (10000,0.03907097)};
\addplot[color={rgb,1:red,0.549020;green,0.337255;blue,0.294118}, mark=star, mark size=2.6pt, line width=1.2pt] coordinates {(100,0.52415903) (127,0.43734642) (162,0.35913161) (206,0.34454038) (263,0.26845624) (335,0.17654234) (428,0.20858341) (545,0.17624091) (695,0.16855312) (885,0.1000925) (1128,0.12532953) (1438,0.11622441) (1832,0.0891938) (2335,0.07220481) (2976,0.10952329) (3792,0.0722964) (4832,0.06172745) (6158,0.04797124) (7847,0.06293443) (10000,0.05257324)};
\nextgroupplot[
    title={F1 Score},
    ylabel={F1 Score},
]
\addplot[color={rgb,1:red,1.000000;green,0.498039;blue,0.054902}, mark=square*, mark size=1.4pt, line width=0.8pt] coordinates {(100,0.64352209) (127,0.68921805) (162,0.71190393) (206,0.75183236) (263,0.79255797) (335,0.80525103) (428,0.87529973) (545,0.86598912) (695,0.88328388) (885,0.92996337) (1128,0.92600823) (1438,0.94846403) (1832,0.9587941) (2335,0.96071254) (2976,0.97950186) (3792,0.96564086) (4832,0.99759702) (6158,0.98166907) (7847,0.99880952) (10000,1)};
\addplot[color={rgb,1:red,0.121569;green,0.466667;blue,0.705882}, mark=square*, mark size=1.4pt, line width=0.8pt] coordinates {(100,0.7824286) (127,0.80776326) (162,0.84312376) (206,0.83889323) (263,0.87398689) (335,0.88972721) (428,0.92439684) (545,0.91086372) (695,0.95078817) (885,0.97239598) (1128,0.96813801) (1438,0.9726431) (1832,0.97710856) (2335,0.98510997) (2976,0.98167293) (3792,0.99812925) (4832,0.99486478) (6158,0.97607663) (7847,0.99146199) (10000,0.9920813)};
\addplot[color={rgb,1:red,0.549020;green,0.337255;blue,0.294118}, mark=star, mark size=2.6pt, line width=1.2pt] coordinates {(100,0.7809014) (127,0.83812754) (162,0.87268299) (206,0.87930229) (263,0.90820287) (335,0.94451894) (428,0.95447765) (545,0.94711685) (695,0.94045082) (885,0.98197652) (1128,0.95931994) (1438,0.97078479) (1832,0.96817104) (2335,0.97947533) (2976,0.96286167) (3792,0.97287908) (4832,0.97795687) (6158,0.97864599) (7847,0.97950228) (10000,0.97881909)};
\end{groupplot}
\node[font=\small\bfseries, rotate=90]
at ($(group c1r1.west) + (-1.7cm,0)$)
{Gaussian ratio = 0};
\end{tikzpicture}}
  \resizebox{0.85\linewidth}{!}{
\begin{tikzpicture}
\begin{groupplot}[
    group style={group size=3 by 1, horizontal sep=1.6cm},
    width=6.2cm, height=5.2cm,
    xmode=log, log basis x=10,
    xmin=90, xmax=11000,
    xlabel={Sample size per context},
    grid=both, grid style={line width=.1pt, draw=gray!30},
    major grid style={line width=.2pt, draw=gray!50},
    tick label style={font=\footnotesize},
    label style={font=\footnotesize},
    title style={font=\small},
    legend style={font=\scriptsize, draw=none, fill=none},
    legend cell align=left,
]
\nextgroupplot[
    title={Runtime (sec)},
    ylabel={Runtime (sec)},
    ymode=log, log basis y=10,
ytick={0.01, 0.1, 1},
    minor y tick num=0,
]
\addplot[color={rgb,1:red,1.000000;green,0.498039;blue,0.054902}, mark=square*, mark size=1.4pt, line width=0.8pt] coordinates {(100,0.09534571) (127,0.09703499) (162,0.10108747) (206,0.10276444) (263,0.10351726) (335,0.10498289) (428,0.1059449) (545,0.10813547) (695,0.10879488) (885,0.11058754) (1128,0.11363468) (1438,0.11682088) (1832,0.1211019) (2335,0.12591349) (2976,0.13154517) (3792,0.14028277) (4832,0.1478329) (6158,0.16054182) (7847,0.17522014) (10000,0.19581931)};
\addplot[color={rgb,1:red,0.121569;green,0.466667;blue,0.705882}, mark=square*, mark size=1.4pt, line width=0.8pt] coordinates {(100,0.2178033) (127,0.22038114) (162,0.22339338) (206,0.22525074) (263,0.22695448) (335,0.22769259) (428,0.22904496) (545,0.22935526) (695,0.22986681) (885,0.23263707) (1128,0.23546773) (1438,0.23856219) (1832,0.24308485) (2335,0.24748637) (2976,0.25403848) (3792,0.26246441) (4832,0.27040336) (6158,0.28381223) (7847,0.29761298) (10000,0.31795843)};
\addplot[color={rgb,1:red,0.549020;green,0.337255;blue,0.294118}, mark=star, mark size=2.6pt, line width=1.2pt] coordinates {(100,0.11507375) (127,0.11764945) (162,0.12075289) (206,0.12225911) (263,0.12344045) (335,0.12421138) (428,0.12573008) (545,0.12709689) (695,0.12865632) (885,0.13078979) (1128,0.1329831) (1438,0.13650754) (1832,0.14053851) (2335,0.14586016) (2976,0.1518884) (3792,0.16037038) (4832,0.16788922) (6158,0.1818839) (7847,0.19610956) (10000,0.21659647)};
\nextgroupplot[
    title={Relative Frob.\ Error},
    ylabel={Relative Frob.\ Error},
]
\addplot[color={rgb,1:red,1.000000;green,0.498039;blue,0.054902}, mark=square*, mark size=1.4pt, line width=0.8pt] coordinates {(100,0.69000807) (127,0.66958855) (162,0.59578309) (206,0.45849149) (263,0.49283513) (335,0.40835902) (428,0.4301556) (545,0.32881116) (695,0.22318015) (885,0.22995918) (1128,0.1943189) (1438,0.13354272) (1832,0.13612578) (2335,0.12925255) (2976,0.06678346) (3792,0.07782862) (4832,0.03149077) (6158,0.0127119) (7847,0.02359309) (10000,0.02241354)};
\addplot[color={rgb,1:red,0.121569;green,0.466667;blue,0.705882}, mark=square*, mark size=1.4pt, line width=0.8pt] coordinates {(100,0.54151329) (127,0.49653188) (162,0.48673147) (206,0.373841) (263,0.39376808) (335,0.3344533) (428,0.31267499) (545,0.27266228) (695,0.16136797) (885,0.14248186) (1128,0.11115946) (1438,0.09803617) (1832,0.08067591) (2335,0.06589647) (2976,0.06415135) (3792,0.09675247) (4832,0.0660194) (6158,0.08134697) (7847,0.05773252) (10000,0.026908)};
\addplot[color={rgb,1:red,0.549020;green,0.337255;blue,0.294118}, mark=star, mark size=2.6pt, line width=1.2pt] coordinates {(100,0.50841159) (127,0.42986068) (162,0.38065757) (206,0.31338839) (263,0.31358501) (335,0.20993169) (428,0.17737456) (545,0.15161148) (695,0.15775165) (885,0.13436131) (1128,0.08612392) (1438,0.08250095) (1832,0.08052107) (2335,0.09666642) (2976,0.04482851) (3792,0.0811639) (4832,0.05004847) (6158,0.10546898) (7847,0.08415196) (10000,0.05084052)};
\nextgroupplot[
    title={F1 Score},
    ylabel={F1 Score},
]
\addplot[color={rgb,1:red,1.000000;green,0.498039;blue,0.054902}, mark=square*, mark size=1.4pt, line width=0.8pt] coordinates {(100,0.66023636) (127,0.67276) (162,0.72579079) (206,0.79973558) (263,0.78321229) (335,0.82638338) (428,0.82530965) (545,0.86219691) (695,0.90435919) (885,0.90959109) (1128,0.90921679) (1438,0.9509244) (1832,0.94469464) (2335,0.94506329) (2976,0.97876228) (3792,0.96571083) (4832,0.99265306) (6158,0.9981939) (7847,0.99761905) (10000,0.99761905)};
\addplot[color={rgb,1:red,0.121569;green,0.466667;blue,0.705882}, mark=square*, mark size=1.4pt, line width=0.8pt] coordinates {(100,0.79661687) (127,0.79546844) (162,0.81864389) (206,0.86885912) (263,0.85941814) (335,0.88829368) (428,0.90494831) (545,0.91519394) (695,0.94857865) (885,0.95828362) (1128,0.96623142) (1438,0.97365269) (1832,0.97434416) (2335,0.98650796) (2976,0.98321832) (3792,0.9752086) (4832,0.97936705) (6158,0.97984365) (7847,0.98916946) (10000,0.99761905)};
\addplot[color={rgb,1:red,0.549020;green,0.337255;blue,0.294118}, mark=star, mark size=2.6pt, line width=1.2pt] coordinates {(100,0.81236517) (127,0.83601702) (162,0.87443894) (206,0.90085295) (263,0.8933124) (335,0.94718925) (428,0.94705685) (545,0.94325302) (695,0.95454217) (885,0.95465768) (1128,0.97395222) (1438,0.97801826) (1832,0.97348869) (2335,0.97266785) (2976,0.99250794) (3792,0.97051353) (4832,0.9831865) (6158,0.96894469) (7847,0.97652066) (10000,0.98720238)};
\end{groupplot}
\node[font=\small\bfseries, rotate=90]
at ($(group c1r1.west) + (-1.7cm,0)$)
{Gaussian ratio = 0.5};
\end{tikzpicture}}
  \resizebox{0.85\linewidth}{!}{

\begin{tikzpicture}
\begin{groupplot}[
    group style={group size=3 by 1, horizontal sep=1.6cm},
    width=6.2cm, height=5.2cm,
    xmode=log, log basis x=10,
    xmin=90, xmax=11000,
    xlabel={Sample size per context},
    grid=both, grid style={line width=.1pt, draw=gray!30},
    major grid style={line width=.2pt, draw=gray!50},
    tick label style={font=\footnotesize},
    label style={font=\footnotesize},
    title style={font=\small},
    legend style={font=\scriptsize, draw=none, fill=none},
    legend cell align=left,
]
\nextgroupplot[
    title={Runtime (sec)},
    ylabel={Runtime (sec)},
    ymode=log, log basis y=10,
    legend to name=sharedlegend,
    legend columns=3,
    ytick={0.01, 0.1, 1},
    minor y tick num=0,
]
\addplot[color={rgb,1:red,0.549020;green,0.337255;blue,0.294118}, mark=star, mark size=2.6pt, line width=1.2pt] coordinates {(100,0.1162363) (127,0.11662441) (162,0.1204424) (206,0.12181733) (263,0.12409027) (335,0.18908348) (428,0.12532243) (545,0.12683086) (695,0.12865237) (885,0.13113424) (1128,0.13292009) (1438,0.13654277) (1832,0.1412215) (2335,0.1462372) (2976,0.15212846) (3792,0.16005136) (4832,0.16823649) (6158,0.18098603) (7847,0.19590498) (10000,0.21744073)};
\addlegendentry{Ours (TSCD)}
\addplot[color={rgb,1:red,0.121569;green,0.466667;blue,0.705882}, mark=square*, mark size=1.4pt, line width=0.8pt] coordinates {(100,0.21824141) (127,0.21949846) (162,0.2230116) (206,0.2246126) (263,0.22707156) (335,0.22737451) (428,0.22771678) (545,0.22898456) (695,0.23063388) (885,0.23274346) (1128,0.23566575) (1438,0.23839784) (1832,0.24298358) (2335,0.24841424) (2976,0.25387321) (3792,0.26190608) (4832,0.27003297) (6158,0.28270046) (7847,0.29775315) (10000,0.31964992)};
\addlegendentry{Pairwise correlations only}
\addplot[color={rgb,1:red,1.000000;green,0.498039;blue,0.054902}, mark=square*, mark size=1.4pt, line width=0.8pt] coordinates {(100,0.09553762) (127,0.09643061) (162,0.10112749) (206,0.10250244) (263,0.1047251) (335,0.10572015) (428,0.10574114) (545,0.1076667) (695,0.10866661) (885,0.1112847) (1128,0.11357651) (1438,0.11669557) (1832,0.12056994) (2335,0.12644298) (2976,0.13212717) (3792,0.14002692) (4832,0.14774645) (6158,0.16059157) (7847,0.17574502) (10000,0.196006)};
\addlegendentry{Proj. norm only}
\nextgroupplot[
    title={Relative Frob.\ Error},
    ylabel={Relative Frob.\ Error},
]
\addplot[color={rgb,1:red,1.000000;green,0.498039;blue,0.054902}, mark=square*, mark size=1.4pt, line width=0.8pt] coordinates {(100,0.65616063) (127,0.68156972) (162,0.64357952) (206,0.53484774) (263,0.49024616) (335,0.41991958) (428,0.3687963) (545,0.29681278) (695,0.18419238) (885,0.25188444) (1128,0.15678689) (1438,0.12960752) (1832,0.10016376) (2335,0.0878805) (2976,0.07438428) (3792,0.09654172) (4832,0.05381591) (6158,0.0495389) (7847,0.03462341) (10000,0.02305878)};
\addplot[color={rgb,1:red,0.121569;green,0.466667;blue,0.705882}, mark=square*, mark size=1.4pt, line width=0.8pt] coordinates {(100,0.52641654) (127,0.53164985) (162,0.46765749) (206,0.39373408) (263,0.38113477) (335,0.31249053) (428,0.31704387) (545,0.219562) (695,0.15822064) (885,0.16370991) (1128,0.11467359) (1438,0.06949819) (1832,0.0929828) (2335,0.10331123) (2976,0.05694488) (3792,0.0489716) (4832,0.05121765) (6158,0.05563268) (7847,0.04015488) (10000,0.06837534)};
\addplot[color={rgb,1:red,0.549020;green,0.337255;blue,0.294118}, mark=star, mark size=2.6pt, line width=1.2pt] coordinates {(100,0.48361769) (127,0.45056849) (162,0.38030441) (206,0.31416387) (263,0.30785653) (335,0.22872176) (428,0.21746268) (545,0.16803233) (695,0.10681034) (885,0.12974764) (1128,0.09016296) (1438,0.10864269) (1832,0.11678765) (2335,0.12440654) (2976,0.06988044) (3792,0.07785908) (4832,0.08175532) (6158,0.06904602) (7847,0.05528345) (10000,0.05582976)};
\nextgroupplot[
    title={F1 Score},
    ylabel={F1 Score},
]
\addplot[color={rgb,1:red,1.000000;green,0.498039;blue,0.054902}, mark=square*, mark size=1.4pt, line width=0.8pt] coordinates {(100,0.6797302) (127,0.6706251) (162,0.67830799) (206,0.75155037) (263,0.78364955) (335,0.83493452) (428,0.85116392) (545,0.87148375) (695,0.93098992) (885,0.88897547) (1128,0.94268326) (1438,0.94576142) (1832,0.96554424) (2335,0.97273928) (2976,0.96099262) (3792,0.9555222) (4832,0.9792406) (6158,0.98119664) (7847,0.9847619) (10000,0.99761905)};
\addplot[color={rgb,1:red,0.121569;green,0.466667;blue,0.705882}, mark=square*, mark size=1.4pt, line width=0.8pt] coordinates {(100,0.78426631) (127,0.78581093) (162,0.82281855) (206,0.86290652) (263,0.85153176) (335,0.9032955) (428,0.88917996) (545,0.93803009) (695,0.95570719) (885,0.94537525) (1128,0.96742668) (1438,0.98109394) (1832,0.967868) (2335,0.98098485) (2976,0.98187947) (3792,0.98454342) (4832,0.98349272) (6158,0.98374385) (7847,0.99037741) (10000,0.98423918)};
\addplot[color={rgb,1:red,0.549020;green,0.337255;blue,0.294118}, mark=star, mark size=2.6pt, line width=1.2pt] coordinates {(100,0.8193358) (127,0.83647352) (162,0.85938325) (206,0.90046896) (263,0.89365578) (335,0.92993216) (428,0.9209311) (545,0.95110353) (695,0.96554811) (885,0.95403696) (1128,0.96720125) (1438,0.96268325) (1832,0.96168471) (2335,0.96299362) (2976,0.97472689) (3792,0.96893768) (4832,0.96835257) (6158,0.9737511) (7847,0.98097729) (10000,0.98131387)};
\end{groupplot}
\node[font=\small\bfseries, rotate=90]
at ($(group c1r1.west) + (-1.7cm,0)$)
{Gaussian ratio = 1};
\node[anchor=north, yshift=-0.8cm] at (group c2r1.south) {\pgfplotslegendfromname{sharedlegend}};
\end{tikzpicture}}
  \caption{TSCD performance comparison with ablated versions across different noise settings, with Gaussian ratios 0, 0.5, and 1. Gaussian ratio denotes the ratio of nodes given Gaussian distributed noise, with the rest given Student-$t_5$ distributions. }\label{fig:ablation}
\end{figure}

\end{document}